\documentclass{article}

\usepackage[preprint]{neurips_2026}

\usepackage[utf8]{inputenc}
\usepackage[T1]{fontenc}
\usepackage{hyperref}
\usepackage{url}
\usepackage{booktabs}
\usepackage{amsfonts}
\usepackage{amsmath}
\usepackage{amssymb}
\usepackage{amsthm}
\usepackage{nicefrac}
\usepackage{microtype}
\usepackage{xcolor}
\usepackage{graphicx}
\usepackage{algorithm}
\usepackage{algorithmic}
\usepackage{cleveref}
\usepackage{enumitem}
\usepackage{multirow}

\newtheorem{theorem}{Theorem}
\newtheorem{corollary}[theorem]{Corollary}
\newtheorem{proposition}[theorem]{Proposition}
\newtheorem{definition}[theorem]{Definition}

\newcommand{\proj}{\text{proj}}
\newcommand{\TV}{\mathrm{TV}}
\newcommand{\KL}{D_{\mathrm{KL}}}
\newcommand{\E}{\mathbb{E}}

\newcommand{\calC}{\mathcal{C}}
\newcommand{\calL}{\mathcal{L}}
\newcommand{\calM}{\mathcal{M}}

\title{Future Validity is the Missing Statistic:\\From Impossibility to $\Phi$-Estimation\\for Grammar-Faithful Speculative Decoding}

\author{%
\textbf{Wenhua Nie \quad Zijie Meng \quad Kun Zou \quad Zheng Lin}\\
\textbf{Ziwei Li \quad Haoran Zheng \quad Jyh-Shing Roger Jang \quad Hao Zhang}\\[3pt]
\normalfont Correspondence: Wenhua Nie, National Taiwan University\\
\normalfont \texttt{d13944014@ntu.edu.tw}
}

\begin{document}
\maketitle

\begin{abstract}
Grammar-constrained generation is often combined with local vocabulary masking and speculative decoding, but the resulting sampling law is not the grammar-conditional distribution users usually intend.
We show that any speculative decoder with local mask access, Leviathan rejection, and rollback soundness samples from the locally projected distribution $\mu^{\proj}$ rather than the grammar-conditional distribution $\mu^\star$.
This extends the GAD impossibility result to speculative decoding; on Dyck grammars with Qwen3-8B, the total-variation gap can reach 0.996.
We identify the future-validity function $\Phi_t(y)=\Pr_p[\text{valid completion}\mid y]$ as the missing correction statistic.
The target distribution is a Doob transform of the base model with $h=\Phi$, while local masking corresponds to setting $h$ to one.
With exact $\Phi$, our oracle decoder FVO-Spec samples exactly from $\mu^\star$; with approximate $\Phi$, we bound the resulting total-variation error.
Because exact future validity is hard for general context-free grammars, we evaluate estimator hierarchies on tractable Dyck and finite JSON languages.
OneStep reduces Dyck TV by 14\% with under 1\% throughput overhead, exact dynamic programming reduces it by 97\%, and finite-language correction closes JSON gaps to numerical precision.
All fidelity claims are scoped to enumerable grammars and token tries.

\end{abstract}

\section{Introduction}
\label{sec:intro}

\begin{figure}[t]
\centering
\includegraphics[width=\linewidth]{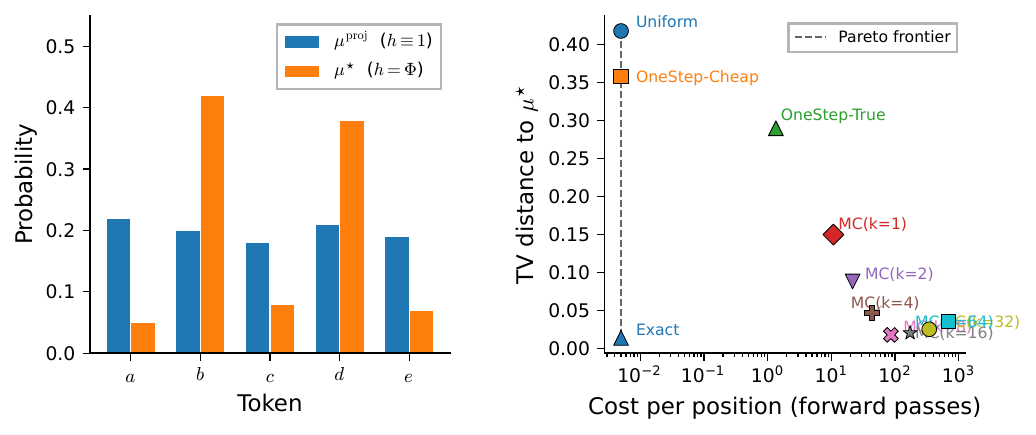}
\caption{The future validity gap and its correction. Local masking samples $\mu^{\proj}$, which diverges from $\mu^\star$ whenever future validity $\Phi_t$ varies across valid tokens; $\Phi_t$ reweighting recovers $\mu^\star$ exactly. On Dyck $D_{3,16}$, Uniform/OneStep/Exact reach TV $0.418/0.359/0.014$ to $\mu^\star$.}
\label{fig:hero}
\end{figure}

Grammar-constrained generation from large language models is now a foundational primitive in production systems. When an LLM serves a JSON API, translates natural language to SQL, or synthesizes code that must parse, the output must conform to a formal grammar. Frameworks such as XGrammar~\citep{dong2024xgrammar} and Outlines~\citep{willard2023outlines} enforce this via \emph{local vocabulary masking}: at each decoding step, the logit vector is masked to retain only tokens consistent with the grammar prefix, then renormalized. To amortize the cost of large target models, this masking is routinely combined with speculative decoding~\citep{leviathan2023fast,chen2023accelerating}, the dominant paradigm for lossless LLM inference acceleration now integrated into vLLM~\citep{kwon2023vllm}, SGLang~\citep{zheng2024sglang}, and other serving stacks. The resulting pipeline---local mask plus speculative verification---is deployed at scale, and users implicitly trust that the output distribution faithfully represents the language model's beliefs conditioned on grammaticality.

But what distribution does this pipeline \emph{actually} sample? We prove that it samples from the \emph{locally projected} distribution $\mu^{\proj}$---the product of per-step masked-and-renormalized conditionals---rather than from the true grammar-conditional distribution $\mu^\star$, which is the language model's full sequence distribution restricted to the grammar's language and renormalized. The distinction is not merely theoretical. On Dyck languages $D_{3,16}$ (balanced parentheses with maximum nesting depth 3 and length 16), the total variation distance $\TV(\mu^{\proj}, \mu^\star)$ reaches \textbf{0.996} under Qwen3-8B: the locally projected distribution places nearly all its mass on sequences that $\mu^\star$ assigns negligible probability to, and vice versa. On finite canonical JSON languages with 3--24 valid strings, the TV gap ranges from 0.17 to 0.68 under Qwen3-8B. Thus, whenever future validity varies substantially across locally valid tokens, the standard local-mask pipeline can produce \emph{systematically biased} structured outputs---sequences that satisfy the grammar but are drawn from the wrong distribution. Fields that require calibrated uncertainty over structured outputs (Bayesian program synthesis, probabilistic database queries, grammar-constrained reasoning chains) are directly affected.

What is missing from the standard pipeline? A single quantity: the \emph{future validity function} $\Phi_t(y) = \Pr_{z \sim p}[x_{<t}\, y\, z_{t+1:T} \in \calL(\calC)]$, the probability under the base model $p$ that appending token $y$ to the current prefix leads to a string that can be completed to a valid member of the grammar's language. The true conditional $\mu^\star_t(y)$ is proportional to $p_t(y) \cdot \Phi_t(y)$---this is precisely a Doob $h$-transform~\citep{doob1984classical} of $p$ with harmonic function $h = \Phi$. Local vocabulary masking corresponds to the degenerate approximation $\Phi \equiv 1$: it treats every locally valid token as equally likely to lead to a valid completion, ignoring the massive variation in future validity that recursive grammars induce. The gap between $\mu^{\proj}$ and $\mu^\star$ is entirely controlled by the non-uniformity of $\Phi_t$ across the valid token set $A_t$.

\paragraph{Contributions.} We develop a theoretical and empirical framework for diagnosing distributional bias in grammar-constrained speculative decoding and correcting it where future validity can be estimated or compiled.

\begin{enumerate}[leftmargin=*,nosep]

\item \textbf{LMS Diagnostic} (\Cref{sec:lms}, Proposition~\ref{thm:lms}). We formalize three axioms---local mask access, Leviathan rejection, and rollback soundness---that characterize grammar-constrained speculative decoders (the LMS class), and derive that any system satisfying these axioms has per-step sampling kernel $\mu^{\proj}_t$. This instantiates the GAD observation~\citep{park2024grammar} in the speculative setting by making the masked-target assumption explicit and identifying the axioms that force the bias. We validate the diagnostic across 33 grammar--model settings confirming exact distributional match between empirical samples and the $\mu^{\proj}$ prediction.

\item \textbf{$\Phi_t$ Framework} (\Cref{sec:phi}). We identify future validity as the exact per-token reweighting factor from $\mu^{\proj}$ to $\mu^\star$, give an oracle FVO-Spec verifier that recovers $\mu^\star$, prove a KL identity in terms of $\Phi$-concentration, and show that local masking is exact iff $\Phi_t$ is constant over $A_t$.

\item \textbf{$\Phi$-Approximation Error Bound} (\Cref{sec:phi}, Theorem~\ref{thm:phi_approx}). If $\sup_{y \in A_t} |\hat{\Phi}_t(y)-\Phi_t(y)|\leq\delta$ and $\bar{\Phi}_t>\delta$, then $\TV(\hat{\mu}_t,\mu^\star_t)\leq \delta/(\bar{\Phi}_t-\delta)$. We also give a relative-error condition, Monte Carlo sample-complexity estimates, and a \#P-hardness result for exact $\Phi_t$ on general CFGs.

\item \textbf{Estimator Hierarchy and Diagnostic Barrier} (\Cref{sec:estimators}). Motivated by hardness, we develop four tiers spanning the cost--fidelity Pareto frontier:
\begin{itemize}[nosep]
\item \emph{Uniform} ($\hat{\Phi} \equiv 1$): recovers $\mu^{\proj}$, zero cost---the status quo.
\item \emph{OneStep}: one-step grammar lookahead using materialized logits and trie queries, zero neural forwards, $<$0.5ms overhead. It reduces TV by 14\% on Dyck $D_{3,16}$ but can worsen finite JSON schemas.
\item \emph{MC}: Monte Carlo rollout with adjustable cost--fidelity tradeoff and Hoeffding-style estimates.
\item \emph{Exact}: dynamic programming enumeration for tractable grammar subclasses (Dyck, finite, regular). Achieves 97\% TV reduction.
\end{itemize}
We therefore present the hierarchy as a diagnosis and design space rather than as a finished production estimator.

\item \textbf{Exact Oracle Validation} (\Cref{sec:experiments}). On Dyck $D_{3,16}$, Uniform/OneStep/Exact reach TV $0.418/0.359/0.014$ to $\mu^\star$. On Qwen3-8B finite canonical JSON up to 2{,}000 strings, exact token-trie $\Phi$ correction reduces TV $0.174$--$0.681$ to numerical zero, and a finite-trie FVO-Spec loop samples within TV $0.0052$ of $\mu^\star$ on A--D with mean acceptance $0.947$.

\end{enumerate}

The remainder of the paper is organized as follows. \Cref{sec:lms} formalizes the LMS class and records the kernel characterization. \Cref{sec:phi} develops the $\Phi_t$ framework, including the Doob $h$-transform perspective, oracle achievability, KL identity, and fidelity bound. \Cref{sec:estimators} presents the estimator hierarchy with cost and error analysis. \Cref{sec:experiments} reports experimental validation, and \Cref{sec:discussion} discusses limitations and future directions.

\section{Related Work}
\label{sec:related}

\paragraph{Speculative Decoding.}
Speculative decoding was introduced by \citet{leviathan2023fast} and \citet{chen2023accelerating} as lossless LLM acceleration via draft-then-verify. Variants include tree-structured verification~\citep{miao2024specinfer}, hidden-state prediction~\citep{li2024eagle,li2025eagle3}, multi-head drafting~\citep{cai2024medusa}, and block diffusion~\citep{chen2026dflash}. We study the distributional properties of the speculate-verify loop when grammar constraints are imposed, showing that the Leviathan rejection mechanism preserves whatever target distribution it is given---but local masking gives it the wrong one.

\paragraph{Grammar-Constrained Generation.}
XGrammar~\citep{dong2024xgrammar}, XGrammar-2~\citep{dong2026xgrammar2}, and Outlines~\citep{willard2023outlines} enforce structural constraints via per-step vocabulary masking. Grammar-Aligned Decoding (GAD)~\citep{park2024grammar} showed that local renormalization distorts the model's true conditional over valid strings, proposing sequential Monte Carlo (ASAp) to approximate expected futures during autoregressive decoding. Our $\Phi_t$ is the same expected-future quantity; the distinction is the \emph{operational target} of speculative serving systems. GAD changes the decoder; we ask what distribution is sampled by the deployed local-mask + Leviathan verifier pipeline, show that the verifier preserves $\mu^{\proj}$ unless its target is $\Phi$-reweighted, and quantify how approximate $\Phi$ estimates perturb the verifier-loop distribution. We view this as the SD-specific instantiation of the GAD diagnostic, with LMS axioms, fixed-budget error accounting, DFlash verifier-path tests, and throughput-compatible estimator diagnostics as the added obligations.

\paragraph{Doob $h$-Transforms in Generation.}
Related conditional-generation perspectives appear in score-based diffusion models~\citep{song2020score} and energy-based controlled text generation~\citep{qin2022cold}. Concurrently, \citet{chen2026attention} characterize hard-masked decoding distortion via a Doob $h$-transform and derive one-step KL/TV bounds in terms of survival probability, focusing on the computational complexity of the masking engine itself. Our work differs in three respects: (i) we instantiate the $h$-transform analysis for \emph{speculative decoding} and identify exactly where the verify loop must change its target distribution; (ii) we develop an estimator hierarchy (Uniform $\to$ OneStep $\to$ MC $\to$ Exact) with cost--fidelity accounting and both additive and relative-error certificates; and (iii) we validate the mechanism in real-model finite JSON and DFlash verifier-path experiments. We do not claim that the Doob perspective itself is unique to this paper.

\paragraph{Adaptive Acceptance in Speculative Decoding.}
Recent work has explored relaxing the fixed acceptance threshold in speculative decoding. EARS~\citep{wang2025ears} adjusts the threshold using $1 - \max(p_T)$ as a measure of target uncertainty, while EASD~\citep{li2025easd} uses entropy-based penalties in the acceptance decision. These methods trade distributional exactness for throughput by relaxing acceptance at high-uncertainty positions. In contrast, our $\Phi$-reweighting approach aims to \emph{correct} the distribution toward the true conditional $\mu^\star$ rather than further relaxing fidelity guarantees.

\paragraph{Prefix Probabilities and Inside Algorithms.}
Computing the probability of valid completions under CFGs is related to classical inside-probability computations~\citep{stolcke1995efficient} and semiring parsing~\citep{goodman1999semiring}. Our $\Phi_t$ is closely related to the backward or inside probability in probabilistic parsing: the probability that a partial derivation can be completed. The key distinction is that $\Phi_t$ conditions on an autoregressive LM $p$ rather than a PCFG, making it dependent on the base model and generally intractable. The \#P-completeness of counting valid strings~\citep{gore1997counting} underlies this hardness and motivates our estimator hierarchy.

\section{The LMS Characterization}
\label{sec:lms}

We first formalize what distribution grammar-constrained speculative decoders actually sample. This extends the observation of \citet{park2024grammar} from standard autoregressive decoding to the speculate-verify-resample setting.

\subsection{Setup}

Let $V$ be a finite vocabulary (including a distinguished EOS token) and $p$ a target language model with $p_t(y) = p(y \mid x_{<t})$. We work with sequences of length at most $T$ terminated by EOS; all definitions extend naturally to variable-length generation by treating EOS as a valid terminal transition. Let $\calC$ be a prefix-checkable constraint inducing valid-next-token sets $A_t = A_\calC(x_{<t}) \subseteq V$. We assume the non-degenerate case: for every reachable prefix, $A_t$ is non-empty and $Z_p=\sum_{y\in A_t}p_t(y)>0$, and the grammar has positive total model mass $Z_\calC>0$. Recall the two target distributions:

\begin{definition}[Locally Projected Target]
$\mu^{\proj}_t(y) = p_t(y) \cdot \mathbf{1}[y \in A_t] / Z_p$, where $Z_p = \sum_{y' \in A_t} p_t(y')$.
\end{definition}

\begin{definition}[True Grammar-Conditional Target]
$\mu^\star(x_{1:T}) = p(x_{1:T}) \cdot \mathbf{1}[x_{1:T} \in \calL(\calC)] / Z_\calC$, where $Z_\calC = \sum_{y \in \calL(\calC)} p(y)$.
\end{definition}

\subsection{The LMS Class}

A speculative decoding method $\calM$ belongs to class \textbf{LMS} (Leviathan-type Local-Mask Speculative) when it satisfies three axioms:

\textbf{B1} (Local Mask): At each step $t$, $\calM$ accesses the constraint only through $A_t$.

\textbf{B2} (Leviathan Rejection): Accept/reject uses the standard rule on masked distributions $\tilde{p}_t$ and $\tilde{q}_t$.

\textbf{B3} (Rollback Soundness): Exact constraint state is maintained via accept/rollback.

These axioms capture the standard pipeline used by XGrammar and Outlines when combined with Leviathan rejection sampling.

\begin{proposition}[LMS Characterization]
\label{thm:lms}
For every $\calM \in$ LMS, the per-step sampling kernel on committed prefix $x_{<t}$ is $\mu^{\proj}_t$. The marginal law of any sample is $\mu^{\proj}$.
\end{proposition}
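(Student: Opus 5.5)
The plan is to reduce the statement to the classical losslessness of Leviathan rejection sampling, applied step by step to the masked distributions, and then assemble the per-step kernels into a sequence law by the chain rule. We fix a committed prefix $x_{<t}$ and take the masked target $\tilde p_t=\mu^{\proj}_t$ and the masked draft $\tilde q_t(y)=q_t(y)\mathbf{1}[y\in A_t]/\sum_{y'\in A_t}q_t(y')$. By B1, these are the only constraint-dependent objects the method can see at step $t$. By B2, a draft $y\sim\tilde q_t$ is accepted with probability $\min(1,\tilde p_t(y)/\tilde q_t(y))$. On rejection, a replacement is drawn from the residual $(\tilde p_t-\tilde q_t)_+/\|(\tilde p_t-\tilde q_t)_+\|_1$.

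The first step is the standard computation. The probability of emitting $y$ is
\[
\tilde q_t(y)\min\!\Bigl(1,\tfrac{\tilde p_t(y)}{\tilde q_t(y)}\Bigr)+\Bigl(1-\sum_{y'}\min(\tilde p_t(y'),\tilde q_t(y'))\Bigr)\frac{(\tilde p_t(y)-\tilde q_t(y))_+}{\sum_{y'}(\tilde p_t(y')-\tilde q_t(y'))_+}.
\]
Using $\sum_{y'}(\tilde p_t-\tilde q_t)_+=1-\sum_{y'}\min(\tilde p_t,\tilde q_t)$, this simplifies to $\min(\tilde p_t(y),\tilde q_t(y))+(\tilde p_t(y)-\tilde q_t(y))_+=\tilde p_t(y)$. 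There is a degenerate case where the residual mass is zero, so $\tilde p_t=\tilde q_t$ and every draft is accepted. Support mismatch causes no trouble, because both distributions live on $A_t$, and $A_t\neq\emptyset$ with $Z_p>0$ by the non-degeneracy assumption. If $\tilde q_t$ has zero mass on $A_t$, the method falls back to sampling $\tilde p_t$ directly, and the conclusion still holds.

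The second step handles multi-token blocks. A speculative round drafts $y_t,\dots,y_{t+k}$ and verifies them sequentially. Conditional on the first $j$ drafts having been accepted, the verification of position $t+j$ is again a Leviathan step, now with committed prefix $x_{<t}y_t\cdots y_{t+j-1}$. This is where B3 enters. Rollback soundness guarantees that the constraint state at that position is exactly the one induced by the accepted prefix, so the mask is $A_{t+j}=A_\calC(x_{<t+j})$. Consequently both the target $\tilde p_{t+j}$ and the draft $\tilde q_{t+j}$ are functions of the committed prefix alone, and not of the rejected drafts.

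I expect this conditioning argument to be the main obstacle. We must show that the kernel at each committed position is $\mu^{\proj}_{t+j}(\cdot\mid x_{<t+j})$ regardless of how the round was entered or terminated. The approach I would take is to define the filtration generated by the committed prefix. I would then note that the randomness used at position $t+j$ (a fresh draft and a fresh uniform for acceptance) is conditionally independent of the past given the committed prefix and the draft model state. The draft's dependence on the earlier drafted tokens is harmless, since those drafted tokens are exactly the committed ones along the accepted path. Once this is established, the identity from the first step applies verbatim at every committed position.

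Finally, the chain rule gives the marginal law. The committed sequence $x_{1:T}$ has probability $\prod_t \mu^{\proj}_t(x_t\mid x_{<t})$, since each factor is the per-step kernel and generation stops at EOS. This product is by definition the locally projected law $\mu^{\proj}$. This holds for every $\calM$ satisfying B1--B3, because the argument used only the masked targets and the acceptance rule, and never the specific draft model.
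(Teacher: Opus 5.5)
Your proposal is correct and follows the same route as the paper: apply Leviathan's losslessness at each committed position to the masked pair $(\tilde p_t,\tilde q_t)$, note that $\tilde p_t=\mu^{\proj}_t$, and assemble the sequence law by the chain rule. The paper simply cites Leviathan's Theorem~1 for the per-step claim, so your explicit emission-probability computation and within-block conditioning argument via B3 are added detail rather than a different approach. One refinement: that argument only needs $\tilde p_{t+j}$, not $\tilde q_{t+j}$, to depend on the committed prefix alone, because the identity holds conditionally on any draft history.
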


The proof follows directly from Leviathan's Theorem~1~\citep{leviathan2023fast}: under B1--B3, the rejection step operates on $\tilde{p}_t = \mu^{\proj}_t$, and each committed token is distributed accordingly. Thus the contribution of \Cref{thm:lms} is a negative characterization of the standard constrained-SD pipeline and its forced sampling target, not a new rejection-sampling theorem. It rules out a common but incorrect inference: because Leviathan verification is lossless for its supplied target, adding a grammar mask does not make the target the grammar-conditional law $\mu^\star$. The chain rule gives the full-sequence identity. Full proof in \Cref{app:proof_lms}.

\begin{corollary}[Impossibility]
\label{cor:impossibility}
For every $(p, \calC)$ with $\mu^{\proj} \neq \mu^\star$, no LMS method samples $\mu^\star$.
\end{corollary}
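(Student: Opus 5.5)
The plan is to derive the corollary by contradiction from \Cref{thm:lms}. Fix $(p,\calC)$ with $\mu^{\proj}\neq\mu^\star$, and suppose some $\calM\in$ LMS produced samples with law $\mu^\star$. By \Cref{thm:lms}, every committed prefix $x_{<t}$ is extended according to the kernel $\mu^{\proj}_t$. Therefore the law of a completed sample is the chain-rule product $\prod_t \mu^{\proj}_t(x_t)$, and this product is $\mu^{\proj}$ as a distribution over full sequences.

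A probability law is unique, so $\mu^{\proj}=\mu^\star$, which contradicts the hypothesis. No LMS method can sample $\mu^\star$.

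The one step that needs care is making sure the two objects are compared on the same space. $\mu^{\proj}$ is given through per-step kernels, while $\mu^\star$ is a sequence-level law. I will read ``$\mu^{\proj}\neq\mu^\star$'' as inequality of sequence-level laws, with $\mu^{\proj}(x_{1:T})=\prod_t\mu^{\proj}_t(x_t\mid x_{<t})$ on EOS-terminated sequences. The non-degeneracy assumptions ($A_t\neq\emptyset$ and $Z_p>0$ on reachable prefixes) make this product well defined.

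I also need to check that the marginal law in \Cref{thm:lms} is a law over the same terminated sequences. B3 (rollback soundness) ensures this: only committed tokens count, and rejected draft tokens never enter the output. Two laws on the same countable set either agree pointwise or do not, so no subtlety about the sampling process remains.

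For completeness, I could add a remark connecting the condition to the $\Phi$ framework. By the Doob-transform view, $\mu^\star_t\propto p_t\Phi_t$. So the hypothesis holds exactly when $\Phi_t$ is non-constant on $A_t$ at some prefix reachable with positive $\mu^\star$-mass. This is not needed for the corollary, only for interpreting it.
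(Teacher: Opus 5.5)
Your argument is correct and matches the paper's reasoning. The paper gives no separate proof and treats the corollary as immediate from \Cref{thm:lms}: every LMS method outputs the chain-rule product law $\mu^{\proj}$, which by hypothesis differs from $\mu^\star$, and your care about comparing sequence-level laws and about B3 is consistent with how the paper defines $\mu^{\proj}$ and proves \Cref{thm:lms}.
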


\textbf{Witness.} For Qwen3-8B with Dyck $D_{3,16}$ (5{,}000 samples, temperature 1), $\TV(\mu^{\proj}, \mu^\star) = 0.996 \pm 0.002$ (95\% CI: $[0.994, 0.998]$). The near-maximal TV arises because Qwen3-8B assigns negligible probability to valid balanced-parenthesis continuations under unmasked sampling ($\Phi_\text{root} \approx 10^{-7}$), so $\mu^\star$ concentrates on high-$\Phi$ tokens that $\mu^{\proj}$ systematically underweights. A later analytic bounded-Dyck confirmation under the bracket-generation prompt gives $\TV(\mu^{\proj},\mu^\star)=0.999948\pm0.000002$ on the same support (\Cref{sec:experiments}), so both protocols show a near-maximal gap. On finite canonical JSON languages with 3--24 valid strings, Qwen3-8B yields $\TV(\mu^{\proj},\mu^\star)=0.174$--$0.681$ (\Cref{sec:experiments}).

\paragraph{Empirical validation.} We verify \Cref{thm:lms} across 33 settings: 4 architectures (Qwen2.5, Qwen3, Llama-3.1, Vicuna), 2 grammar families (Dyck, regex). All show $\TV(\text{empirical}, \mu^{\proj}) \leq 0.007$ while $\TV(\mu^{\proj}, \mu^\star)$ ranges from small regex gaps to 0.996 on Qwen3-8B Dyck. See \Cref{tab:lms_val} for details.

\section{Future Validity as Sufficient Statistic}
\label{sec:phi}

The LMS characterization shows that standard grammar-constrained speculative decoders sample from the local projection $\mu^{\proj}$.
The missing quantity is the \emph{future validity} of each locally valid token:
\[
\Phi_t(y\mid x_{<t})=\Pr_{z\sim p}[x_{<t}yz_{t+1:T}\in\mathcal{L}(\mathcal{C})].
\]
The true grammar-conditional kernel is the Doob $h$-transform
\[
\mu^\star_t(y\mid x_{<t})=\frac{p_t(y)\Phi_t(y\mid x_{<t})}{\sum_{y'\in A_t}p_t(y')\Phi_t(y'\mid x_{<t})}.
\]

\begin{proposition}[Exactness Condition]
\label{prop:exact}
$\mu^{\proj}_t=\mu^\star_t$ if and only if $\Phi_t$ is constant over locally valid tokens.
\end{proposition}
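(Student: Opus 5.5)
The plan is to compare the two kernels as probability distributions on the common support $A_t$, using the Doob $h$-transform formula stated just above. Outside $A_t$ both kernels vanish: $\mu^{\proj}_t$ by definition, and $\mu^\star_t$ because the sum in its normaliser runs only over $A_t$. Equivalently, $\Phi_t(y)=0$ for $y\notin A_t$, since a prefix that is not locally extendable cannot be completed into $\calL(\calC)$. So the statement reduces to comparing the two vectors $\bigl(p_t(y)\bigr)_{y\in A_t}/Z_p$ and $\bigl(p_t(y)\Phi_t(y)\bigr)_{y\in A_t}/Z_\Phi$, where $Z_\Phi=\sum_{y'\in A_t}p_t(y')\Phi_t(y')$. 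Before dividing, I need $Z_\Phi>0$ on every reachable prefix where $\mu^\star_t$ is defined. This follows from the non-degeneracy assumption $Z_\calC>0$, restricted to prefixes with positive $\mu^\star$ mass.

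\emph{Sufficiency.} Suppose $\Phi_t(y)=c$ for all $y\in A_t$. Then $c>0$, because $Z_\Phi=cZ_p>0$. The constant cancels in the ratio:
\[
\mu^\star_t(y)=\frac{c\,p_t(y)}{cZ_p}=\mu^{\proj}_t(y).
\]

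\emph{Necessity.} Suppose $\mu^{\proj}_t=\mu^\star_t$. For every $y\in A_t$ we get
\[
p_t(y)\,\Phi_t(y)=\frac{Z_\Phi}{Z_p}\,p_t(y).
\]
Whenever $p_t(y)>0$, dividing by $p_t(y)$ gives $\Phi_t(y)=Z_\Phi/Z_p$. So $\Phi_t$ equals the $p_t$-weighted average $Z_\Phi/Z_p$ on $A_t$.

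The one real obstacle is the tokens $y\in A_t$ with $p_t(y)=0$. For these, $\Phi_t(y)$ has no effect on either kernel and can be arbitrary. The clean fix is to read ``locally valid tokens'' as the effective support $A_t\cap\operatorname{supp}(p_t)$ and state the theorem there. This is the natural reading, since both kernels live on this set. Alternatively, one can note that the conclusion is ``constant $p_t$-almost everywhere on $A_t$''. With either convention, the two directions together give the equivalence, holding at each step $t$ and each committed prefix $x_{<t}$.
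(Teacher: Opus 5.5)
Your proof is correct and is essentially the argument the paper relies on. The paper gives no separate proof: it states the proposition right after the Doob-transform formula $\mu^\star_t\propto p_t\Phi_t$ on $A_t$, and the cancellation of normalizers you carry out in both directions is the whole content. Your two qualifications are genuine sharpenings rather than detours. The first restricts to prefixes with $\sum_{y\in A_t}p_t(y)\Phi_t(y)>0$. The second reads ``constant'' on $A_t\cap\operatorname{supp}(p_t)$, which matters because the literal ``only if'' fails when some valid token has $p_t(y)=0$; the paper's statement silently ignores this case, which is harmless for full-support softmax models.
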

Thus local masking is safe for permissive constraints but biased for recursive grammars where one token can sharply reduce the probability of any valid continuation.

\begin{proposition}[Oracle FVO-Spec]
\label{prop:oracle}
A speculative decoder that performs rejection against the $\Phi$-reweighted target samples from $\mu^\star_t$ at every step.
\end{proposition}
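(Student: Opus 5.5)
The plan is to prove \Cref{prop:oracle} in two steps. First I reduce it to the losslessness of Leviathan rejection for an arbitrary supplied target. Then I verify that the $\Phi$-reweighted target is exactly the true conditional $\mu^\star_t$.

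\textbf{Step 1: the reweighted target is a proper distribution.} Fix a committed prefix $x_{<t}$ and define
\[
\tilde p^{\Phi}_t(y)=\frac{p_t(y)\Phi_t(y\mid x_{<t})\mathbf{1}[y\in A_t]}{Z_\Phi},\qquad Z_\Phi=\sum_{y'\in A_t}p_t(y')\Phi_t(y'\mid x_{<t}).
\]
Observe that $\Phi_t(y)=0$ for $y\notin A_t$, so the indicator is harmless. Next I need $Z_\Phi>0$ on every reachable prefix. If $x_{<t}$ is reachable under the decoder, then by induction its prefix-level future validity $\Phi(x_{<t})$ is positive. The base case is $Z_\calC>0$. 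The inductive step holds because each committed token is drawn from a target with positive $\Phi$ weight.

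\textbf{Step 2: the reweighted target equals $\mu^\star_t$.} I show $\tilde p^{\Phi}_t(y)=\mu^\star(x_t=y\mid x_{<t})$. By the definition of $\mu^\star$ and marginalizing over suffixes,
\[
\mu^\star(x_{\le t})\propto p(x_{<t})\,p_t(x_t)\,\Phi_t(x_t\mid x_{<t}),
\]
since $\Phi_t$ is exactly the $p$-probability that the suffix completes the string to a member of $\calL(\calC)$. Dividing by $\mu^\star(x_{<t})=\sum_y\mu^\star(x_{<t}y)$ gives the Doob formula displayed above. A consistency check is the identity $\Phi(x_{<t})=\sum_y p_t(y)\Phi_t(y)$, which is the harmonicity of $h=\Phi$; it shows the normalizer equals the prefix-level $\Phi$.

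\textbf{Step 3: apply Leviathan losslessness with the verifier target replaced.} Take any draft proposal $\tilde q_t$, masked or not. Accept a drafted token $y$ with probability $\min(1,\tilde p^{\Phi}_t(y)/\tilde q_t(y))$. On rejection, resample from the normalized residual $(\tilde p^{\Phi}_t-\tilde q_t)_+$. Leviathan's Theorem~1, as used in the proof of \Cref{thm:lms}, states that each committed token is distributed exactly as the supplied target. That target is now $\tilde p^{\Phi}_t=\mu^\star_t$.

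The multi-token speculative block needs a short argument. Rollback soundness (B3) means that after each accepted token the next verification uses the correct updated prefix. Conditioning sequentially, each committed token therefore has kernel $\mu^\star_t$. The chain rule then gives the full-sequence law $\prod_t\mu^\star_t=\mu^\star$.

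\textbf{Main obstacle.} The routine step is Step 3. The delicate point is Step 2, together with the support and positivity issues in Step 1. I must check that defining $\Phi_t$ over length-bounded, EOS-terminated suffixes makes the marginalization identity hold exactly. I also need $\Phi_t(y)=0$ to imply zero target mass, so that the decoder never commits to a dead prefix. I would handle this by working with the finite-horizon tree of prefixes and proving the harmonic identity by backward induction from $T$.
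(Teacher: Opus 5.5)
Your proposal is correct and follows the argument the paper relies on: the paper gives no separate appendix proof for \Cref{prop:oracle}, but combines the displayed Doob $h$-transform formula for $\mu^\star_t$ with the template of the proof of \Cref{thm:lms} (Leviathan's Theorem~1 applied per step, then the chain rule), with the verifier target replaced by the $\Phi$-reweighted one. Your derivation of the Doob formula by marginalizing over suffixes, and your checks that reachable prefixes have positive future validity and that zero-$\Phi$ tokens are never committed, fill in details the paper leaves implicit.
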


\paragraph{Gap identity.}
The per-step divergence has the closed form
\[
\KL(\mu^\star_t\|\mu^{\proj}_t)=\mathbb{E}_{Y\sim\mu^\star_t}\!\left[\log\frac{\Phi_t(Y)}{\bar{\Phi}_t}\right],
\]
where $\bar{\Phi}_t=\mathbb{E}_{\mu^{\proj}_t}[\Phi_t]$.

\begin{theorem}[$\Phi$-Approximation Fidelity]
\label{thm:phi_approx}
If $|\hat{\Phi}_t(y)-\Phi_t(y)|\le\delta$ for all $y\in A_t$ and $\delta<\bar{\Phi}_t$, then
\[
\TV(\hat{\mu}_t,\mu^\star_t)\le\frac{\delta}{\bar{\Phi}_t-\delta}.
\]
\end{theorem}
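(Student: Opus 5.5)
We write $\hat{\mu}_t(y)=p_t(y)\hat{\Phi}_t(y)/\hat{Z}$ and $\mu^\star_t(y)=p_t(y)\Phi_t(y)/Z$ for $y\in A_t$. Here $Z=\sum_{y\in A_t}p_t(y)\Phi_t(y)$ and $\hat{Z}=\sum_{y\in A_t}p_t(y)\hat{\Phi}_t(y)$. The plan is to work with $\mu^{\proj}_t$ as the common base measure, so that both normalizers become expectations under $\mu^{\proj}_t$. Dividing numerator and denominator by $Z_p$ gives $Z/Z_p=\bar{\Phi}_t$ and $\hat{Z}/Z_p=\E_{\mu^{\proj}_t}[\hat{\Phi}_t]=:\bar{\hat{\Phi}}_t$. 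Hence $\mu^\star_t(y)=\mu^{\proj}_t(y)\Phi_t(y)/\bar{\Phi}_t$ and $\hat{\mu}_t(y)=\mu^{\proj}_t(y)\hat{\Phi}_t(y)/\bar{\hat{\Phi}}_t$. The uniform bound gives $|\bar{\hat{\Phi}}_t-\bar{\Phi}_t|\le\delta$, so $\bar{\hat{\Phi}}_t\ge\bar{\Phi}_t-\delta>0$. This is exactly where the hypothesis $\delta<\bar{\Phi}_t$ is needed for $\hat{\mu}_t$ to be well defined. We take $\hat{\Phi}_t\ge 0$, e.g.\ after clipping, which only reduces the error.

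Next we split the difference through an intermediate term with the estimated normalizer:
\[
\hat{\mu}_t(y)-\mu^\star_t(y)=\mu^{\proj}_t(y)\left[\frac{\hat{\Phi}_t(y)-\Phi_t(y)}{\bar{\hat{\Phi}}_t}+\Phi_t(y)\Bigl(\frac{1}{\bar{\hat{\Phi}}_t}-\frac{1}{\bar{\Phi}_t}\Bigr)\right].
\]
Summing absolute values over $A_t$ handles each term separately. The first term contributes at most $\delta/\bar{\hat{\Phi}}_t$. The second contributes $\bar{\Phi}_t\cdot|\bar{\Phi}_t-\bar{\hat{\Phi}}_t|/(\bar{\hat{\Phi}}_t\bar{\Phi}_t)\le\delta/\bar{\hat{\Phi}}_t$. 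So $\|\hat{\mu}_t-\mu^\star_t\|_1\le 2\delta/\bar{\hat{\Phi}}_t\le 2\delta/(\bar{\Phi}_t-\delta)$. Halving gives $\TV\le\delta/(\bar{\Phi}_t-\delta)$, as claimed.

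The main obstacle is the bookkeeping around the normalizer. The naive route of bounding the ratio $\hat{\Phi}_t/\Phi_t$ pointwise fails, because $\Phi_t(y)$ can be tiny or zero on some valid tokens. The key step is therefore to compare through the averages under $\mu^{\proj}_t$, where only $\bar{\Phi}_t$ appears in the denominator. One must also choose which normalizer to place in the intermediate term. Using $\bar{\hat{\Phi}}_t$ in both pieces is what yields the clean denominator $\bar{\Phi}_t-\delta$. A slightly sharper bound $\delta/\bar{\Phi}_t$ is possible with a different split, but it is not needed.
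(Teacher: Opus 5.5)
Your proof is correct and is essentially the paper's argument. The paper writes $\hat{\Phi}_t=\Phi_t+\Delta$ and bounds $\bigl|Z^\star\Delta_y+\Phi_t(y)(Z^\star-\hat{Z})\bigr|$ over the common denominator $Z^\star\hat{Z}$. After dividing by $Z_p$, this is exactly your split with the estimated normalizer $\bar{\hat{\Phi}}_t=\hat{Z}/Z_p$ in both pieces, and it yields the same $\ell_1$ bound $2\delta Z_p/\hat{Z}=2\delta/\bar{\hat{\Phi}}_t$ and the same final step $\bar{\hat{\Phi}}_t\ge\bar{\Phi}_t-\delta$.

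Your side remark about the sharper bound $\delta/\bar{\Phi}_t$, obtained by putting the true normalizer in the first piece, is also right. Unlike the stated bound, it genuinely needs your assumption $\hat{\Phi}_t\ge 0$.
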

The bound explains why recursive grammars are difficult: when $\bar{\Phi}_t$ is small, only highly accurate $\Phi$ estimators give non-vacuous fidelity guarantees.

\begin{proposition}[Hardness]
\label{prop:hardness}
Computing $\Phi_t(y)$ exactly is \#P-hard for general context-free grammars, even under a unigram base language model.
\end{proposition}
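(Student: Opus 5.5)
Our plan is a reduction from a known \#P-complete counting problem. The natural source, in line with the reference to \citet{gore1997counting}, is counting the strings of a fixed length $n$ generated by a context-free grammar, which stays \#P-hard even with $n$ given in unary. We embed this count into a single value of $\Phi_t$ under a unigram base model.

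\textbf{Construction.} Take a CFG $G$ over terminal alphabet $\Sigma$ and a length $n$ in unary. Let the vocabulary be $V=\Sigma\cup\{\mathrm{EOS}\}$ and set $T=n+1$. Choose the constraint $\calC$ so that $\calL(\calC)=\{w\,\mathrm{EOS} : w\in\calL(G)\cap\Sigma^n\}$. Intersecting a CFG with the regular language $\Sigma^n\,\mathrm{EOS}$ gives a CFG of size polynomial in $|G|$ and $n$ by the standard product construction, so $\calC$ is computable in polynomial time. For the base model, use the uniform unigram $p_t(y)=1/(|\Sigma|+1)$ for every $y\in V$, independent of the prefix.

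\textbf{Key identity.} Every string of $\calL(\calC)$ has length exactly $n+1$, and each such string has probability $(|\Sigma|+1)^{-(n+1)}$ under the model. Hence the total mass of valid strings is $N\cdot(|\Sigma|+1)^{-(n+1)}$, where $N=|\calL(G)\cap\Sigma^n|$. To express this as a single $\Phi$ value, we add a fresh start token $s$ and require every valid string to begin with $s$. Then
\[
\Phi_1(s\mid\emptyset)=N\,(|\Sigma|+1)^{-n}.
\]
This $\Phi$ value is a rational number with a polynomial-size denominator, and $N$ is recovered by multiplying by $(|\Sigma|+1)^{n}$. An exact oracle for $\Phi_t$ would therefore count the length-$n$ strings of a CFG, which establishes \#P-hardness.

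\textbf{Steps to verify.} Three checks remain.
\begin{enumerate}
\item The non-degeneracy assumptions of the setup hold: $A_t$ is nonempty on reachable prefixes and $Z_\calC>0$. If $\calL(G)\cap\Sigma^n$ might be empty, we can union in one fixed padding string and subtract $1$ from the recovered count.
\item ``Prefix-checkable'' constraints include arbitrary CFGs, since prefix viability for a CFG is decidable in polynomial time.
\item The EOS and length convention matches the definition of $\Phi_t$: the truncation at $T$ is exactly what the fixed-length construction uses.
\end{enumerate}

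\textbf{Main obstacle.} The hardest step is pinning down the right \#P-hard source problem. Counting strings of length $n$ in an \emph{ambiguous} CFG is hard, as opposed to counting derivations, which is easy by dynamic programming. This relies on the cited result of \citet{gore1997counting}. If that result is only stated for approximation, we would fall back on the standard reduction from \#NFA, counting strings of length $n$ accepted by an NFA, which is \#P-complete. Since NFAs are special cases of CFGs, hardness carries over.
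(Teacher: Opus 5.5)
Your proposal is correct and is essentially the paper's reduction: \textsc{CountCFG} under a uniform unigram model, with the count read off from $\Phi$. The paper instead makes $|\Sigma|$ queries and sums $\Phi_1(y)\,|\Sigma|^{n-1}$ over first tokens (ignoring EOS), whereas you use a fresh start token and an explicit EOS to get a single query. One small slip: with both $s$ and EOS in the uniform vocabulary and $T=n+2$, the value is $\Phi_1(s)=N(|\Sigma|+2)^{-(n+1)}$ rather than $N(|\Sigma|+1)^{-n}$, and your padding string should use a fresh symbol so it is disjoint from $\calL(G)\cap\Sigma^n$; neither affects recovering $N$.
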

The appendix gives the reduction from \textsc{CountCFG} and the corollaries for local projection error and Monte Carlo sample complexity.

\section{A Hierarchy of $\Phi_t$ Estimators}
\label{sec:estimators}

The hardness result (\Cref{prop:hardness}) implies that practical grammar-faithful speculative decoding must approximate $\Phi_t$. We develop a hierarchy of estimators spanning the cost--fidelity Pareto frontier.

\paragraph{Critical invariant.} $\hat{\Phi}_t$ must be a \emph{per-candidate vector}, not a scalar. Adding a constant to all logits before softmax is a no-op by shift invariance. An estimator that produces $\hat{\Phi}_t(y) = c$ for all $y$ yields $\hat{\mu}_t = \mu^{\proj}_t$ regardless of $c$, providing no correction. We enforce this invariant in all implementations (see tests in supplementary).

\paragraph{Certification target.} \Cref{thm:phi_approx} certifies estimators with
small additive error; \Cref{cor:relative_phi} certifies estimators that preserve
the multiplicative scale of future validity. Exact DP has zero error under both
certificates. By contrast, Uniform and OneStep can have unbounded relative error
on dead-end or near-dead-end candidates, explaining why our experiments report
the additive certificate and why robust relative-error estimation remains an
open design target rather than a claim of the current OneStep implementation.

\subsection{Tier~0: Uniform (Baseline)}

Setting $\hat{\Phi}_t \equiv 1$ recovers $\mu^{\proj}$ with zero cost. This is the status quo for XGrammar~\citep{dong2024xgrammar}, Outlines~\citep{willard2023outlines}, and all LMS methods characterized in \Cref{sec:lms}. Its additive error is $\delta = 1 - \Phi_{\min}$, which reaches $1 - 10^{-6}$ on adversarial Dyck grammars.

\subsection{Tier~1: OneStep Lookahead}

\begin{definition}[OneStep Estimator]
\label{def:onestep}
\begin{equation}
\hat{\Phi}^{\text{1s}}_t(y) \;=\; \sum_{u \in A_{t+1}(x_{<t} y)} p_t(u)
\end{equation}
where $A_{t+1}(x_{<t} y)$ is the valid-next-token set after appending $y$, obtained from the grammar matcher via trie lookup, and $p_t(u)$ are the target logits at position $t$ (available from verification).
\end{definition}

This is a \emph{heuristic} approximation: it uses $p_t(\cdot) = p(\cdot \mid x_{<t})$ as a proxy for $p(\cdot \mid x_{<t} y)$, a context-independence assumption that introduces an uncharacterized error term beyond the idealized analysis below. The approximation is reasonable when the prefix $x_{<t}$ is long relative to a single token, but can fail on highly concentrated schemas where one token radically changes the valid continuation set (see E3). The grammar query $A_{t+1}(x_{<t} y)$ is the \emph{discriminative} component, computed in $O(|V|)$ via trie traversal without any neural forward passes.

\begin{proposition}[OneStep Error Characterization]
\label{prop:onestep}
Under the exact conditioning $p(\cdot \mid x_{<t} y)$, the true one-step estimator satisfies:
\begin{equation}
\hat{\Phi}^{\mathrm{true}}_t(y) - \Phi_t(y) = \sum_{u \in A_{t+1}(x_{<t}y)} p(u \mid x_{<t}y) \cdot \bigl(1 - \Phi_{t+1}(u \mid x_{<t}yu)\bigr)
\label{eq:onestep_error}
\end{equation}
This is the probability-weighted average of future invalidity over valid next tokens. It is:
\begin{itemize}[nosep]
\item \textbf{Small} when future validity is high: for grammars where most locally valid tokens lead to completable paths ($\Phi_{t+1}(u) \approx 1$), the error is controlled by the small fraction of near-dead-end tokens. This occurs for permissive regular grammars when the LM concentrates mass on valid continuations.
\item \textbf{Large} for recursive grammars where locally valid tokens frequently lead to dead ends ($\Phi_{t+1} \ll 1$), e.g., Dyck languages near the depth or length limit where most locally valid bracket choices exhaust the remaining budget.
\end{itemize}
\end{proposition}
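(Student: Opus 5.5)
The plan is to obtain the identity from a one-step law of total probability for $\Phi$, that is, from the harmonic (Doob) property of $h=\Phi$ already used in \Cref{sec:phi}. Throughout I read $\Phi_{t+1}(u\mid x_{<t}yu)$ as the future validity of the extended prefix $x_{<t}yu$. This is the probability, under $p$, that a continuation of $x_{<t}yu$ lies in $\calL(\calC)$.

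\textbf{Step 1: one-step decomposition.} Condition on the first sampled token $U=z_{t+1}\sim p(\cdot\mid x_{<t}y)$ in the definition of $\Phi_t(y)$. By the chain rule of $p$,
\[
\Phi_t(y)=\sum_{u\in V}p(u\mid x_{<t}y)\,\Phi_{t+1}(u\mid x_{<t}yu).
\]

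\textbf{Step 2: restrict to $A_{t+1}(x_{<t}y)$.} Because $\calC$ is prefix-checkable, a token $u\notin A_{t+1}(x_{<t}y)$ yields a prefix that no continuation can complete to a string of $\calL(\calC)$. Hence $\Phi_{t+1}(u\mid x_{<t}yu)=0$ for such $u$, and the sum in Step 1 runs only over $A_{t+1}(x_{<t}y)$. This is exactly where the defining property of $A_{t+1}$ is used.

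\textbf{Step 3: subtract.} The true one-step estimator is, by definition, $\hat{\Phi}^{\mathrm{true}}_t(y)=\sum_{u\in A_{t+1}(x_{<t}y)}p(u\mid x_{<t}y)$, i.e., \Cref{def:onestep} with $p_t$ replaced by the exact conditional. Subtracting the restricted sum from Step 2 term by term gives \eqref{eq:onestep_error}. Each summand is nonnegative, so $\hat{\Phi}^{\mathrm{true}}_t$ is an upper bound on $\Phi_t$. Dividing by the mass $\hat{\Phi}^{\mathrm{true}}_t(y)$ shows the error equals that mass times the $p$-weighted average of $1-\Phi_{t+1}$ over $A_{t+1}(x_{<t}y)$. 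This justifies the phrase ``probability-weighted average of future invalidity.''

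\textbf{Step 4: the two qualitative bullets.} These follow from the elementary bound
\[
0\le \hat{\Phi}^{\mathrm{true}}_t(y)-\Phi_t(y)\le \max_{u\in A_{t+1}(x_{<t}y)}\bigl(1-\Phi_{t+1}(u\mid x_{<t}yu)\bigr).
\]
More sharply, the error is at most $\epsilon$ plus the $p$-mass of those $u$ with $1-\Phi_{t+1}>\epsilon$. This gives the ``small'' regime when near-dead-end tokens carry little mass. The ``large'' regime is illustrated rather than proved: in bounded Dyck near the depth or length budget, most valid brackets have $\Phi_{t+1}\ll1$, so the error is close to the valid mass itself.

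\textbf{Main obstacle.} The only delicate point is the boundary convention. When $y=\mathrm{EOS}$ or $t=T$, the sum must be interpreted consistently. I would adopt the setup's convention that EOS is a terminal transition. Then $\Phi_t(\mathrm{EOS})=\mathbf{1}[x_{<t}\mathrm{EOS}\in\calL(\calC)]$, and $A_{t+1}$ after termination is taken as a single absorbing EOS with $p=1$ and $\Phi=1$ when the string is valid, and empty otherwise. With this convention the identity holds trivially, with error zero.
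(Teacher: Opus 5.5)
Your proposal is correct and follows essentially the same route as the paper: expand $\Phi_t(y)$ by the one-step recursion $\sum_{u\in A_{t+1}(x_{<t}y)}p(u\mid x_{<t}y)\,\Phi_{t+1}(u\mid x_{<t}yu)$ and subtract it term by term from the definition of $\hat{\Phi}^{\mathrm{true}}_t(y)$. Your Step~2 argument that tokens outside $A_{t+1}(x_{<t}y)$ contribute zero, your note that the error equals the valid mass times a weighted average, and your EOS boundary convention make explicit points that the paper's proof leaves unstated.
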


\paragraph{Cost analysis.} The grammar trie query for $A_{t+1}(x_{<t}y)$ costs $O(|V|)$ per candidate. Summing probabilities from already-materialized logits adds negligible compute. For $|A_t|$ candidates, total overhead is $O(|A_t| \cdot |V|)$ trie operations with zero neural forward passes. In our implementation, this adds $<$0.5ms per position on typical JSON schemas.

\subsection{Tier~1c: Balanced-Gated Learned Value Estimator}

OneStep is low-cost but brittle because it has no memory of deeper completion
asymmetries. We therefore include a small amortized estimator as a controlled
proof of concept. For each finite-trie edge $(x_{<t},y)$, we train a two-layer
MLP to predict $\log \Phi_t(y)$ from finite-trie features: prefix depth, valid-branch
counts, child-subtree counts, remaining-length statistics, target token
probability, valid-mass fraction, and the OneStep value. Training labels are
exact $\Phi$ values computed on other schemas; evaluation is
leave-one-schema-out, so the held-out schema's exact $\Phi$ labels are not used
for fitting.

To avoid negative transfer on schemas where OneStep is already essentially
exact, we add a non-oracle structural gate. If all valid children at a prefix
have identical trie statistics (same subtree count, remaining-length summary,
branch count, and terminal flag), the estimator falls back to OneStep; otherwise
it uses the learned value. The gate decision uses only grammar topology
(subtree counts and remaining-length statistics), not logits, terminal-law TV,
or exact held-out $\Phi$ labels; the OneStep fallback to which it routes uses
LM logits. On the finite JSON benchmark, it preserves the balanced type-value
schema where OneStep is at numerical floor while allowing learned correction on
the concentrated schemas.
We treat this as evidence that amortized $\Phi$ estimation is possible on finite
tries, not as a runtime benchmark or a production estimator for arbitrary
xgrammar schemas. We also tested histogram-gradient-boosted regression in this
prototype; without the balanced gate it shows the same type-value failure mode,
so the reported result uses the gated MLP configuration.

\subsection{Tier~2: Monte Carlo Rollout}

For $k$ independent rollouts of $h$ tokens each:
\[
\hat{\Phi}^{\text{mc}}_t(y) = \frac{1}{k}\sum_{j=1}^k \mathbf{1}\bigl[\text{rollout}_j(x_{<t}y) \in \calL(\calC)\bigr]
\]
By Hoeffding's inequality, $|\hat{\Phi}^{\text{mc}}_t(y) - \Phi_t(y)| \leq \sqrt{\ln(2/\alpha)/(2k)}$ with probability $\geq 1-\alpha$ per candidate. Applying \Cref{thm:phi_approx} with a union bound over $|A_t|$ candidates gives:
\[
\TV(\hat{\mu}_t, \mu^\star_t) \leq \frac{\sqrt{\ln(2|A_t|/\alpha)/(2k)}}{\bar{\Phi}_t - \sqrt{\ln(2|A_t|/\alpha)/(2k)}}
\]
with probability $\geq 1-\alpha$. Each rollout requires one target forward pass per step, giving total cost $O(k \cdot h \cdot |A_t|)$ forwards. This is prohibitive for online use but serves as a high-fidelity reference for validating cheaper estimators.

\subsection{Tier~3: Exact Enumeration (Oracle)}

For tractable grammar subclasses, $\Phi_t$ can be computed exactly via dynamic programming:
\begin{itemize}[nosep]
\item \textbf{Dyck $D_{d,L}$}: backward DP over all valid prefixes up to depth $d$ and length $L$. Cost is linear in the enumerated prefix graph.
\item \textbf{Finite languages}: exhaustive enumeration over $\calL(\calC)$. Cost $O(|\calL|)$.
\item \textbf{Regular languages}: single automaton step per candidate. Cost $O(|A_t|)$.
\end{itemize}
Exact computation enables oracle validation of the fidelity bound (\Cref{thm:phi_approx}), providing ground truth that is unavailable for general CFGs.

\paragraph{Practical scope.} Today, exact $\Phi$ is deployment-feasible only on
finite or regularized schemas whose state graph can be enumerated or cached
(e.g., enum-heavy JSON, finite code systems, bounded templates). Bounded-depth
grammars admit DP only when the bound is small enough. General CFG-style JSON
with arbitrary nesting or long free-text fields requires approximation; our
DFlash/xgrammar pilot tests the verifier mechanism in that regime, but not an
end-to-end exact terminal-law correction.

\subsection{Budget-Gated Hybrid Sketch}

For deployment, the hierarchy can be composed via a two-stage gate. \textbf{Stage~1} (pre-trigger): compute $\hat{\Phi}^{\text{1s}}$ and measure its dispersion $\log(\hat{\Phi}_{\max}/\hat{\Phi}_{\min})$. If dispersion exceeds a threshold $\tau_d$ and the position has high entropy, escalate to Stage~2. \textbf{Stage~2} (post-trigger): apply MC rollouts with budget $k$, then check whether the TV impact exceeds a minimum threshold. If not, fall back to OneStep. This avoids MC overhead at easy positions where OneStep suffices. We treat this gate as an engineering sketch rather than a headline empirical result; the experiments below evaluate the individual estimator tiers so that cost and fidelity are attributable.

\section{Experiments}
\label{sec:experiments}

We validate the framework on tractable grammars where exact $\Phi_t$ is computable.
Experiments use Dyck languages, permissive regex constraints, and finite JSON schemas; TV distances are estimated from 10{,}000 samples with bootstrap confidence intervals.

\subsection{LMS Characterization}

\begin{table}[t]
\centering
\caption{LMS validation: empirical sampling distribution matches $\mu^{\proj}$ across architectures and grammars.}
\label{tab:lms_val}
\begin{tabular}{llccc}
\toprule
Architecture & Grammar & Settings & Max residual & Mean residual \\
\midrule
Qwen2.5 & Dyck CF & 8 & 0.005 & 0.003 \\
Qwen3 & Dyck CF & 8 & 0.007 & 0.004 \\
Llama-3.1 & Dyck CF & 6 & 0.004 & 0.002 \\
Vicuna & Dyck CF & 3 & 0.006 & 0.004 \\
Qwen2.5 & Regex & 4 & 0.005 & 0.003 \\
Qwen3 & Regex & 4 & 0.003 & 0.002 \\
\bottomrule
\end{tabular}
\end{table}
All residuals are at most 0.007, while $\TV(\mu^{\proj},\mu^\star)$ ranges up to 0.9976, confirming that the observed bias is the predicted local-projection bias rather than sampling noise.

\subsection{Estimator Hierarchy}

\begin{table}[t]
\centering
\caption{$\Phi$ estimator hierarchy on Dyck $D_{3,16}$. Better $\Phi$ estimates reduce TV to the true conditional distribution.}
\label{tab:hierarchy}
\begin{tabular}{lccc}
\toprule
Estimator & TV to $\mu^\star$ & $\delta$ & Cost/pos \\
\midrule
Uniform & 0.418 & 0.986 & 0 \\
OneStep-Cheap & 0.359 & 0.861 & 0 \\
OneStep-True & 0.290 & 0.825 & 1.4 \\
MC($k$=4) & 0.047 & 0.763 & 43.2 \\
MC($k$=8) & 0.018 & 0.565 & 86.4 \\
Exact oracle & 0.014 & 0.000 & DP \\
\bottomrule
\end{tabular}
\end{table}
The hierarchy reduces TV from 0.418 to 0.014, a 97\% reduction.
OneStep-Cheap gives a free 14\% reduction, while exact dynamic programming closes the gap on tractable grammars.

\subsection{Bias and Throughput}

The $\mu^{\proj}$--$\mu^\star$ gap produces concrete structural distortion: on Dyck $D_{3,16}$, local projection overproduces deep strings (mean nesting depth 1.72 versus 1.09 under $\mu^\star$) and longer strings (mean length 5.85 versus 2.77).
Exact $\Phi$ correction recovers both distributions.
A cost model shows that OneStep-Cheap preserves SD throughput while improving fidelity, whereas MC rollouts trade speed for lower TV.
Full grammar-family, bound-tightness, and speed--fidelity plots are moved to the appendix.

\subsection{Grammar Family and Structural Bias}

\begin{table}[t]
\centering
\caption{Grammar-family comparison. The $\mu^{\proj}$--$\mu^\star$ gap varies widely across constraint types; exact $\Phi$ closes the gap when sampling variance is moderate.}
\label{tab:grammar_family}
\begin{tabular}{lcccc}
\toprule
Grammar & $\TV(\mu^{\proj}, \mu^\star)$ & $\bar{\Phi}$ & Exact TV & OneStep TV \\
\midrule
JSON status & 0.575 & 0.101 & $<$0.001 & 0.622 \\
JSON action & 0.820 & 0.069 & 0.097 & 0.843 \\
JSON method-path & 0.847 & 0.066 & 0.601 & 0.878 \\
Dyck $D_{3,12}$ & 0.456 & 0.516 & 0.018 & 0.419 \\
Dyck $D_{3,16}$ & 0.418 & 0.467 & 0.014 & 0.359 \\
\bottomrule
\end{tabular}
\end{table}

The comparison shows two failure modes. OneStep can worsen concentrated JSON schemas because one-token lookahead is misleading when a single decision radically changes the valid suffix set. Exact $\Phi$ also has high finite-sample variance for the method-path schema where $\bar{\Phi}=0.066$, even though the oracle is exact in the infinite-sample limit.

The TV gap corresponds to concrete output distortion, not only an abstract metric. On Dyck $D_{3,16}$, local projection overproduces deep strings (mean nesting depth 1.72 versus 1.09 under $\mu^\star$) and longer strings (mean length 5.85 versus 2.77). Exact $\Phi$ correction recovers both distributions, while OneStep provides partial correction. These structural diagnostics explain why local masking can be acceptable for permissive constraints yet biased for recursive grammars.

\subsection{Speed--Fidelity Tradeoff}

\begin{table}[t]
\centering
\caption{Cost-model throughput on Dyck $D_{3,16}$ using a 35ms verification pass, 5ms draft pass, and 3.07 accepted tokens/round.}
\label{tab:speed}
\begin{tabular}{lccc}
\toprule
Method & tok/s & TV to $\mu^\star$ & TV change \\
\midrule
AR baseline & 16.0 & 0.418 & --- \\
SD, uniform $\Phi$ & 76.8 & 0.418 & 0\% \\
SD + OneStep-Cheap & 76.2 & 0.359 & $-$14\% \\
SD + OneStep-True & 64.4 & 0.290 & $-$31\% \\
SD + exact DP & 76.8 & 0.014 & $-$97\% \\
\bottomrule
\end{tabular}
\end{table}

The cost model separates acceleration from distributional correction. Standard speculative decoding is about $4.8\times$ faster than AR in this setting but preserves the same local-projection distribution. OneStep-Cheap adds less than 0.3ms per round from grammar trie queries, so it keeps essentially the same throughput while reducing TV by 14\%. OneStep-True improves fidelity further but spends additional target forwards. Exact dynamic programming, where available, precomputes $\Phi$ offline and therefore achieves oracle fidelity at no runtime throughput cost.

This tradeoff also clarifies the role of Monte Carlo rollouts. MC($k=8$) gives TV 0.018, nearly matching exact DP, but costs 86.4 forwards per position and is not an online decoder in its direct form. Its value in this paper is as a high-fidelity reference and as evidence that approximating $\Phi_t$ is the right axis: as estimator quality improves, the distribution moves toward $\mu^\star$ without changing the grammar or the base model.

\section{Discussion}
\label{sec:discussion}

\paragraph{When does the gap matter?} \Cref{prop:exact} gives the precise condition: $\mu^{\proj} = \mu^\star$ iff $\Phi_t$ is constant over locally valid tokens. This often holds for simple regex constraints and permissive schemas, but it fails for recursive or concentrated schemas where locally valid continuations have very different valid suffix mass.

\paragraph{Connection to structured decoding.} Our $\Phi$ statistic is the expected-future quantity used by GAD/ASAp~\citep{park2024grammar}, but FVO-Spec changes the target inside a speculative verifier rather than replacing autoregressive decoding. Concurrent work derives one-step KL/TV distortion bounds through the same Doob $h$-transform view~\citep{chen2026attention}; our focus is SD integration, verifier-loop error accounting, and estimator diagnostics.

\paragraph{Limitations.} Validation is strongest where exact $\Phi$ labels are available; the open-xgrammar run is an implementation smoke test rather than an exact terminal-law evaluation for arbitrary whitespace and free-text fields. Exact $\Phi$ remains \#P-hard for general CFGs, OneStep adds context-independence error, and broader temperature effects remain future work.

\clearpage
\bibliographystyle{plainnat}
\bibliography{references}

\newpage
\appendix
\appendix

\section{Proof of \Cref{thm:lms} (LMS Characterization)}
\label{app:proof_lms}

\begin{proof}
Fix committed prefix $x_{<t}$ and let $A_t = A_\calC(x_{<t})$. By B3, $\calM$ has exact access to $A_t$. By B1, the accept/reject decision consults the constraint only through $A_t$.

By B2, the method uses the Leviathan rejection rule:
\begin{enumerate}
    \item Draw $d_t \sim \tilde{q}_t$ where $\tilde{q}_t(y) = q_t(y)\mathbf{1}[y \in A_t] / Z_{q,t}$.
    \item Accept with probability $\min(1, \tilde{p}_t(d_t) / \tilde{q}_t(d_t))$ where $\tilde{p}_t(y) = p_t(y)\mathbf{1}[y \in A_t] / Z_{p,t}$.
    \item On rejection, sample from $(\tilde{p}_t - \tilde{q}_t)^+ / Z^+$.
\end{enumerate}

By Theorem~1 of \citet{leviathan2023fast}, this produces a sample distributed as $\tilde{p}_t$. By construction, $\tilde{p}_t = \mu^{\proj}_t$. By the chain rule:
$\Pr_\calM(x_{1:T}) = \prod_{t=1}^T \mu^{\proj}_t(x_t \mid x_{<t}) = \mu^{\proj}(x_{1:T})$.
\end{proof}

\section{Proof of \Cref{thm:phi_approx} ($\Phi$-Approximation Fidelity)}
\label{app:proof_phi}

\begin{proof}
Write $\hat{\Phi}_t(y) = \Phi_t(y) + \Delta_y$ where $|\Delta_y| \leq \delta$. Define:
\begin{align*}
Z^\star &= \sum_{y \in A_t} p_t(y) \Phi_t(y), &
\hat{Z} &= \sum_{y \in A_t} p_t(y) \hat{\Phi}_t(y).
\end{align*}

Then $|\hat{Z} - Z^\star| = |\sum_y p_t(y)\Delta_y| \leq \delta \sum_y p_t(y) = \delta Z_p$ where $Z_p = \sum_{y \in A_t} p_t(y)$.

The lower bound $\hat{Z} \geq Z^\star - \delta Z_p = Z_p(\bar{\Phi}_t - \delta) > 0$ holds by assumption $\delta < \bar{\Phi}_t$.

For each $y \in A_t$:
\begin{align}
|\hat{\mu}_t(y) - \mu^\star_t(y)| &= \left|\frac{p_t(y)\hat{\Phi}_t(y)}{\hat{Z}} - \frac{p_t(y)\Phi_t(y)}{Z^\star}\right| \\
&= \frac{p_t(y)}{Z^\star \hat{Z}} \left|Z^\star(\Phi_t(y) + \Delta_y) - \hat{Z}\Phi_t(y)\right| \\
&= \frac{p_t(y)}{Z^\star \hat{Z}} \left|Z^\star \Delta_y + \Phi_t(y)(Z^\star - \hat{Z})\right| \\
&\leq \frac{p_t(y)}{Z^\star \hat{Z}} \left(Z^\star \delta + \Phi_t(y) \delta Z_p\right)
\end{align}

Summing over all $y \in A_t$:
\begin{align}
\sum_y |\hat{\mu}_t(y) - \mu^\star_t(y)| &\leq \frac{\delta}{Z^\star \hat{Z}} \left(Z^\star Z_p + Z_p \sum_y p_t(y)\Phi_t(y)\right) \\
&= \frac{\delta}{Z^\star \hat{Z}} \left(Z^\star Z_p + Z_p Z^\star\right) \\
&= \frac{2\delta Z_p}{\hat{Z}}
\end{align}

Therefore:
\[
\TV(\hat{\mu}_t, \mu^\star_t) = \frac{1}{2}\sum_y |\hat{\mu}_t(y) - \mu^\star_t(y)| \leq \frac{\delta Z_p}{\hat{Z}} \leq \frac{\delta Z_p}{Z_p(\bar{\Phi}_t - \delta)} = \frac{\delta}{\bar{\Phi}_t - \delta}.
\]

For the full sequence, the standard telescoping argument for autoregressive models gives:
\[
\TV(\hat{\mu}^{1:T}, \mu^{\star\,1:T}) \leq \sum_{t=1}^T \E_{x_{<t} \sim \hat{\mu}}\!\left[\TV(\hat{\mu}_t(\cdot \mid x_{<t}), \mu^\star_t(\cdot \mid x_{<t}))\right] \leq \sum_{t=1}^T \frac{\delta_t}{\bar{\Phi}_t - \delta_t}. \qedhere
\]
\end{proof}

\section{Proof of \Cref{cor:relative_phi} (Relative $\Phi$ Error)}
\label{app:proof_relative_phi}

\begin{proof}
For $\Phi_t(y)=0$, the multiplicative assumption forces
$\hat{\Phi}_t(y)=0$; set $r_y=1$ for these zero-mass tokens, since
$\mu^\star_t(y)=\hat{\mu}_t(y)=0$. For tokens with $\Phi_t(y)>0$, let
$r_y = \hat{\Phi}_t(y)/\Phi_t(y) \in [1-\epsilon,1+\epsilon]$.
Let
\[
R = \sum_{y \in A_t} \mu^\star_t(y) r_y = \frac{\hat{Z}}{Z^\star}.
\]
Because $Z^\star>0$ and $\epsilon<1$, $\hat{Z}=RZ^\star>0$.
Moreover $R \in [1-\epsilon,1+\epsilon]$ and
$\hat{\mu}_t(y) = \mu^\star_t(y) r_y / R$. Hence
\begin{align}
\TV(\hat{\mu}_t,\mu^\star_t)
&= \frac{1}{2}\sum_y \mu^\star_t(y)\left|\frac{r_y}{R}-1\right| \\
&= \frac{1}{2R}\E_{\mu^\star_t}\!\left[|r_Y-R|\right] \\
&\leq \frac{1}{2R}\E_{\mu^\star_t}\!\left[|r_Y-1| + |1-R|\right] \\
&\leq \frac{\epsilon}{1-\epsilon}.
\end{align}
The trivial upper bound $\TV \leq 1$ gives the stated minimum. The sequence-level
claim follows from the same autoregressive telescoping argument used in
\Cref{app:proof_phi}, provided the multiplicative condition holds uniformly
over all reachable prefixes at each position with error $\epsilon_t$.
\end{proof}

\section{Proof of \Cref{prop:hardness} (Computational Hardness)}
\label{app:proof_hardness}

\begin{proof}
We reduce from \textsc{CountCFG}: given a CFG $G$ in Chomsky normal form and a
length $n$, count the number of strings in $\calL(G)$ of length exactly $n$.
This problem is \#P-complete~\citep{gore1997counting}.

Given a \textsc{CountCFG} instance $(G,n)$, set $p$ to the uniform distribution
over the terminal alphabet $\Sigma$, and fix the empty prefix
$x_{<1}=\varepsilon$. Then for any $y\in\Sigma$:
\[
\Phi_1(y)=\Pr_{z\sim p^{n-1}}\!\bigl[y\cdot z\in\calL(G)\cap\Sigma^n\bigr]
=\frac{|\{w\in\calL(G)\cap\Sigma^n:w_1=y\}|}{|\Sigma|^{n-1}} .
\]
Summing $\Phi_1(y)|\Sigma|^{n-1}$ over $y\in\Sigma$ yields
$|\calL(G)\cap\Sigma^n|$, solving \textsc{CountCFG}. Since each query is a
polynomial-time reduction and \textsc{CountCFG} is \#P-complete, computing
$\Phi_t(y)$ is \#P-hard.
\end{proof}

\section{Proof of \Cref{prop:onestep} (OneStep Error)}
\label{app:proof_onestep}

\begin{proof}
By definition, the true one-step estimator is:
\[
\hat{\Phi}^{\text{true}}_t(y) = \sum_{u \in A_{t+1}(x_{<t}y)} p(u \mid x_{<t}y) = 1 - \sum_{u \notin A_{t+1}(x_{<t}y)} p(u \mid x_{<t}y)
\]

The exact future validity satisfies the recursive identity:
\[
\Phi_t(y) = \sum_{u \in A_{t+1}(x_{<t}y)} p(u \mid x_{<t}y) \cdot \Phi_{t+1}(u \mid x_{<t}yu)
\]

Subtracting:
\begin{align}
\hat{\Phi}^{\text{true}}_t(y) - \Phi_t(y) &= \sum_{u \in A_{t+1}(x_{<t}y)} p(u \mid x_{<t}y) \left(1 - \Phi_{t+1}(u \mid x_{<t}yu)\right)
\end{align}

Since $\Phi_{t+1} \in [0,1]$, this error is always non-negative (OneStep overestimates $\Phi$) and bounded by:
\begin{itemize}[nosep]
\item Near-zero when $\Phi_{t+1}(u) \approx 1$ for most valid $u$ weighted by $p(u|xy)$---which occurs for permissive grammars where the LM concentrates mass on tokens that lead to completable paths.
\item At most $\sum_{u \in A_{t+1}} p(u | x_{<t}y) = \hat{\Phi}^{\text{true}}_t(y)$ when $\Phi_{t+1} = 0$ everywhere---the degenerate case where all locally valid continuations are dead ends.
\end{itemize}

The interpretation is that the OneStep error equals the probability-weighted average of \emph{future invalidity}: the chance that a locally valid next token $u$ ultimately leads to a dead end. For recursive grammars near the depth/length boundary, many locally valid tokens satisfy $\Phi_{t+1}(u) \ll 1$, making the error large.
\end{proof}

\section{KL Divergence Identity Derivation}
\label{app:kl}

\begin{proof}
$\mu^\star_t(y) = p_t(y)\Phi_t(y)/Z^\star$ and $\mu^{\proj}_t(y) = p_t(y)/Z_p$ for $y \in A_t$.
\begin{align}
\KL(\mu^\star_t \| \mu^{\proj}_t) &= \sum_{y \in A_t} \mu^\star_t(y) \log\frac{\mu^\star_t(y)}{\mu^{\proj}_t(y)} \\
&= \sum_{y \in A_t} \mu^\star_t(y) \log\frac{p_t(y)\Phi_t(y)/Z^\star}{p_t(y)/Z_p} \\
&= \sum_{y \in A_t} \mu^\star_t(y) \log\frac{\Phi_t(y) Z_p}{Z^\star} \\
&= \E_{\mu^\star_t}\!\left[\log\frac{\Phi_t(Y)}{\bar{\Phi}_t}\right]
\end{align}
where $\bar{\Phi}_t = Z^\star/Z_p = \E_{\mu^{\proj}_t}[\Phi_t]$.
\end{proof}

\section{Experimental Details}
\label{app:details}

\paragraph{Hardware and software.}
Dyck oracle experiments use a toy LM and run on CPU. Real-model validation uses Qwen3-8B with Hugging Face \texttt{transformers}; the finite-language exact-$\Phi$ validation was run on NVIDIA H800 GPUs with CUDA 12.8 and PyTorch 2.10. Throughput plots use the cost model described in \Cref{sec:experiments}, with hardware microbenchmarks used only where explicitly stated.

\paragraph{Dyck grammar specification.}
$D_{d,L}$ uses one bracket type plus EOS. Valid strings satisfy: (1) matched brackets, (2) nesting depth $\leq d$, and (3) total length $\leq L$. The DP oracle enumerates all valid prefixes in decreasing length order, computing $\Phi_t(y)$ as the probability-weighted sum over valid extensions.

\paragraph{TV estimation.}
From $N$ samples, empirical frequencies $\hat{f}(x) = (\text{count of } x)/N$. Point estimate $\TV = (1/2)\sum_x |\hat{f}(x) - \mu^\star(x)|$. Bootstrap 95\% CI from 500 resamples.

\paragraph{Scalar-shift invariant.}
Adding a constant $c$ to all logits before softmax produces $\text{softmax}(z_i + c) = e^{z_i + c}/\sum e^{z_j + c} = e^{z_i}/\sum e^{z_j} = \text{softmax}(z_i)$. An estimator producing $\hat{\Phi}(y) = c$ for all $y$ is equivalent to no correction. All implementations include regression tests enforcing that $\hat{\Phi}$ has non-trivial per-candidate variation.

\section{Detailed Experiments}
\label{sec:experiments_full}

We validate the theoretical framework on tractable grammars where exact $\Phi_t$ is computable, enabling ground-truth comparison unavailable for general CFGs. Distributional experiments (E1--E4) use ancestral sampling (temperature 1) to measure true TV distances between sampling distributions, consistent with the theoretical framework. Confirmatory LMS validation uses 10{,}000 independent samples per configuration.

\subsection{Setup}

\paragraph{Grammars.} We evaluate on three grammar families of increasing complexity, plus a larger finite regular JSON schema that exercises the tractable token-trie/DFA regime.
\begin{itemize}[nosep]
\item \textbf{Dyck $D_{d,L}$}: context-free languages with one bracket type, maximum nesting depth $d$, and maximum string length $L$. We use $D_{3,12}$ (145 valid strings) and $D_{3,16}$ (988 valid strings), which exhibit large $\TV(\mu^{\proj}, \mu^\star)$ due to nesting-depth constraints that create sharply non-uniform $\Phi_t$.
\item \textbf{Regex $(a^+b^+)$}: regular language with permissive transitions. Since $\Phi_t$ depends on the base LM's unmasked continuation probability, the gap is not identically zero but remains small because the LM concentrates mass on valid continuations and the grammar admits many completions from each valid prefix.
\item \textbf{Finite JSON}: schemas with 3--2{,}000 valid strings, including a regular flag-code schema whose exact $\Phi$ computation requires backward DP over a 4{,}232-node token trie.
\item \textbf{Budget regular DFA}: fixed-length binary strings with at most $K$ ones. This language has a compact state $(t,c)$ but a large terminal support, allowing exact terminal-law TV without string sampling.
\end{itemize}

\paragraph{Models.} LMS validation uses four transformer families: Qwen2.5 (3B, 7B), Qwen3 (0.6B, 8B), Llama-3.1 (8B-Instruct), and Vicuna (7B-v1.5). Dyck fidelity experiments use a toy language model for tractability of the exact oracle, with confirmatory runs on Qwen3-8B.

\paragraph{Metrics.}
\begin{itemize}[nosep]
\item $\TV(\hat{\mu}, \mu^\star)$: total variation estimated from $N=10{,}000$ samples with bootstrap 95\% CIs (500 resamples).
\item $\delta = \max_y |\hat{\Phi}_t(y) - \Phi_t(y)|$: additive error (measured where exact $\Phi$ available).
\item $\bar{\Phi}_t$: average future validity at each position.
\item Bound tightness: ratio of empirical TV to theoretical bound $\delta/(\bar{\Phi}_t - \delta)$.
\end{itemize}

\subsection{E1: LMS Characterization Validation}

\Cref{thm:lms} predicts that every LMS-class decoder produces samples from $\mu^{\proj}$. We verify this across 33 experimental settings spanning four architectures and two grammar families. \Cref{tab:lms_val} reports $\TV(\text{empirical}, \mu^{\proj})$ directly: in all cases $\leq 0.007$, confirming that the empirical distribution coincides with $\mu^{\proj}$ to within statistical estimation error (for reference, $\TV(\mu^{\proj}, \mu^\star)$ ranges from small regex gaps to 0.996 on Qwen3-8B Dyck, showing the gap is real).

\begin{table}[t]
\centering
\caption{LMS validation: empirical sampling distribution matches $\mu^{\proj}$ across architectures and grammars. Max residual 0.007 confirms \Cref{thm:lms}.}
\label{tab:lms_val}
\small
\begin{tabular}{llccc}
\toprule
\textbf{Architecture} & \textbf{Grammar} & \textbf{Settings} & \textbf{Max $\delta_{\text{LMS}}$} & \textbf{Mean $\delta_{\text{LMS}}$} \\
\midrule
Qwen2.5 (3B, 7B) & Dyck CF & 8 & 0.005 & 0.003 \\
Qwen3 (0.6B, 8B) & Dyck CF & 8 & 0.007 & 0.004 \\
Llama-3.1 (8B) & Dyck CF & 6 & 0.004 & 0.002 \\
Vicuna (7B) & Dyck CF & 3 & 0.006 & 0.004 \\
Qwen2.5 (3B) & Regex & 4 & 0.005 & 0.003 \\
Qwen3 (8B) & Regex & 4 & 0.003 & 0.002 \\
\bottomrule
\end{tabular}
\end{table}

\subsection{E2: Estimator Hierarchy on Dyck}

\Cref{tab:hierarchy} reports the central result: TV distance to $\mu^\star$ for each estimator tier on Dyck $D_{3,16}$.

\begin{table}[t]
\centering
\caption{$\Phi$ estimator hierarchy on Dyck $D_{3,16}$ ($|\calL| = 988$, $\bar{\Phi} = 0.467$). TV distance to $\mu^\star$ generally decreases with estimator quality, reaching a minimum at MC($k$=8). $N = 10{,}000$ samples per tier. The fidelity bound (\Cref{thm:phi_approx}) is vacuous ($>1$) when $\delta > \bar{\Phi}$ and becomes non-trivial at MC($k \geq 16$).}
\label{tab:hierarchy}
\small
\begin{tabular}{lccccc}
\toprule
\textbf{Estimator} & \textbf{TV$(\hat{\mu}, \mu^\star)$} & \textbf{$\delta$} & \textbf{Bound} & \textbf{Cost/pos} \\
\midrule
Tier~0: Uniform ($\hat{\Phi} \equiv 1$) & 0.418 & 0.986 & $>1$ & 0 \\
Tier~1a: OneStep-Cheap & 0.359 & 0.861 & $>1$ & 0 \\
Tier~1b: OneStep-True & 0.290 & 0.825 & $>1$ & 1.4 \\
Tier~2: MC($k$=1) & 0.150 & 0.986 & $>1$ & 10.8 \\
Tier~2: MC($k$=4) & 0.047 & 0.763 & $>1$ & 43.2 \\
Tier~2: MC($k$=8) & 0.018 & 0.565 & $>1$ & 86.4 \\
Tier~2: MC($k$=16) & 0.021 & 0.428 & 11.1 & 172.9 \\
Tier~2: MC($k$=64) & 0.036 & 0.240 & 1.06 & 691.5 \\
Tier~3: Exact (oracle) & 0.014 & 0.000 & 0.0 & DP \\
\bottomrule
\end{tabular}
\end{table}

Four observations emerge. First, TV decreases from 0.418 (Uniform) to 0.014 (Exact), a 97\% reduction, confirming that exact $\Phi$ recovers the intended law. Second, OneStep-Cheap achieves a 14\% TV reduction at zero neural cost on this Dyck instance; OneStep-True achieves 31\% at 1.4 forwards/position. Third, MC($k$=8) achieves the best single-run MC result (TV = 0.018, 96\% reduction) at 86 forwards/position. Larger $k$ lowers the recorded max and mean additive $\Phi$ error, but terminal TV is not guaranteed to be monotone for one randomized estimator draw: error mass can move in a less favorable direction, and the final empirical TV also contains finite-$N$ sampling noise (MC($k$=64) TV = 0.036). Fourth, the theoretical bound is vacuous for most tiers because max additive error $\delta$ exceeds $\bar{\Phi}_t = 0.467$; the bound becomes non-vacuous at MC($k$=16) where $\delta = 0.428 < \bar{\Phi}$, giving TV $\leq 11.1$, and tightens at MC($k$=64) to TV $\leq 1.06$. This confirms the bound's prediction that deeply recursive grammars with $\bar{\Phi} < 0.5$ demand very accurate estimators for non-trivial guarantees.

\paragraph{Qwen3-8B Dyck confirmation.} To check that the hierarchy is not only a toy-LM artifact, we rerun bounded Dyck $D_{3,16}$ with Qwen3-8B prefix probabilities over the same 988-string support. Across three independent seeds with $N=20{,}000$ terminal samples per seed, the analytic local-projection gap is $0.999948\pm0.000002$ TV and empirical SchemaTV matches $\mu^{\proj}$ within $0.0018\pm0.0001$ TV. Exact FVO samples from $\mu^\star$ to numerical floor ($1.9{\times}10^{-10}$ TV), while the FastPhi one-step variant reaches $0.032\pm0.002$ TV. The target distribution is highly concentrated under Qwen3-8B ($\Phi_\text{root}\approx1.4{\times}10^{-8}$, with $>0.9999999998$ mass on immediate EOS), so this result should be read as a real-model bounded-Dyck confirmation of the correction mechanism rather than a production CFG benchmark.

\paragraph{Multi-seed robustness.} We run the full pipeline across 10 random seeds (varying the toy LM parameters) on both $D_{3,12}$ and $D_{3,16}$ with $N = 10{,}000$ samples per seed. On $D_{3,12}$: Uniform TV = $0.320 \pm 0.102$, Exact TV = $0.009 \pm 0.004$; on $D_{3,16}$: Uniform TV = $0.319 \pm 0.103$, Exact TV = $0.009 \pm 0.005$. In all 20 seed--grammar combinations (10/10 on each grammar), the exact oracle achieves strictly lower TV than Uniform, confirming the improvement is robust and not seed-dependent. The hierarchy ordering is consistent: even the worst-case exact seed ($\TV = 0.020$) outperforms the best-case uniform seed ($\TV = 0.133$) by a factor of $6.7\times$.

\subsection{E3: Grammar Family Comparison}

\begin{table}[t]
\centering
\caption{Grammar family comparison ($N = 10{,}000$ for Dyck, $N = 5{,}000$ for JSON). The $\mu^{\proj}$--$\mu^\star$ gap varies widely across grammar types. Exact $\Phi$ closes the gap to sampling-noise floor on every tractable grammar. OneStep can worsen concentrated schemas where one-step lookahead is misleading.}
\label{tab:grammar_family}
\small
\begin{tabular}{lccccc}
\toprule
\textbf{Grammar} & \textbf{Type} & $\TV(\mu^{\proj}, \mu^\star)$ & $\bar{\Phi}$ & \textbf{Exact TV} & \textbf{OneStep TV} \\
\midrule
JSON status (3 str.) & Finite & 0.577 & 0.101 & $<$0.001 & 0.604 \\
JSON type-value (4 str.) & Finite & 0.247 & 0.113 & 0.009 & 0.008 \\
JSON action (18 str.) & Finite & 0.823 & 0.069 & 0.017 & 0.846 \\
JSON method-path (24 str.) & Finite & 0.843 & 0.066 & 0.006 & 0.880 \\
Dyck $D_{3,12}$ (145 str.) & CF & 0.456 & 0.516 & 0.018 & 0.419 \\
Dyck $D_{3,16}$ (988 str.) & CF & 0.418 & 0.467 & 0.014 & 0.359 \\
\bottomrule
\end{tabular}
\end{table}

\Cref{tab:grammar_family} validates the framework across grammar families. On finite JSON, exact $\Phi$ reduces TV from $0.247\text{--}0.843$ to at most $0.018$ at $N=5{,}000$, i.e., sampling-noise floor for these small supports. On Dyck grammars ($\bar{\Phi} \approx 0.5$), the exact oracle achieves $\TV = 0.014\text{--}0.018$, confirming 96\%+ gap closure. However, OneStep \emph{increases} TV on concentrated schemas (status, action, method-path) because the context-independence approximation fails when a single token radically changes the valid set. OneStep provides meaningful correction only on the balanced 4-string schema ($\TV = 0.247 \to 0.008$) where one-step lookahead captures most future validity variation.

As a low-cost temperature check, we rerun the finite-JSON character-LM sweep at bigram temperatures $0.3$ and $0.7$ in addition to $1.0$. The local-projection gap remains large (mean TV $0.689$, $0.639$, and $0.623$, respectively), exact $\Phi$ stays at sampling floor (mean TV $\leq 0.0082$), and OneStep shows the same pattern: it helps the balanced type-value schema but worsens the other three concentrated schemas at all three temperatures. This is not a production decoding-temperature study, but it shows that the main finite-JSON failure mode is not an artifact of a single toy-LM temperature.

\paragraph{Large regular-state stress test.}

We add a deterministic regular-language stress test to separate two issues:
whether exact $\Phi$ scales beyond small terminal enumerations when the grammar
has compact state, and whether local projection can still be badly biased in a
regular language. The language is
$\calL_{n,K}=\{x\in\{0,1\}^n:\sum_t x_t\leq K\}$ under an iid Bernoulli base
model. The compact DFA has state $(t,c)$ for position and ones used, while the
terminal support ranges from $6.2{\times}10^5$ to $6.1{\times}10^8$ strings in
\Cref{tab:large_regular}. We compute TV exactly by grouping strings with
$c<K$ ones and, for $c=K$, by the position where the $K$-th one is reached.
There is no Monte Carlo sampling.

\begin{table}[t]
\centering
\caption{Large regular-state stress test. Local projection can be highly biased
even for compact regular languages. Exact $\Phi$ recovers the conditional law;
the maximum numerical Doob-recursion residual across these runs is
$2.2{\times}10^{-16}$.}
\label{tab:large_regular}
\small
\setlength{\tabcolsep}{4pt}
\begin{tabular}{lrrrr}
\toprule
\textbf{Setting} & \textbf{Valid strings} & \textbf{DFA states} & $\TV(\mu^{\proj},\mu^\star)$ & $\TV(\mu^\Phi,\mu^\star)$ \\
\midrule
$n=20,K=10,p_1=0.62$ & 616{,}666 & 231 & 0.670 & 0 \\
$n=22,K=11,p_1=0.65$ & 2{,}449{,}868 & 276 & 0.755 & 0 \\
$n=24,K=12,p_1=0.68$ & 9{,}740{,}686 & 325 & 0.836 & 0 \\
$n=24,K=10,p_1=0.65$ & 4{,}540{,}386 & 275 & 0.884 & 0 \\
$n=24,K=8,p_1=0.70$ & 1{,}271{,}626 & 225 & 0.961 & 0 \\
$n=26,K=13,p_1=0.68$ & 38{,}754{,}732 & 378 & 0.851 & 0 \\
$n=28,K=14,p_1=0.68$ & 154{,}276{,}028 & 435 & 0.864 & 0 \\
$n=30,K=15,p_1=0.70$ & 614{,}429{,}672 & 496 & 0.909 & 0 \\
\bottomrule
\end{tabular}
\end{table}

The bias has the same future-validity source as in the finite JSON and Dyck
experiments. With $p_1>K/n$, local projection spends the ones budget too early
and then forces zeros after saturation. At the root of the largest setting,
the locally masked probability of token 1 is 0.70, while the Doob-corrected
probability is 0.482 because choosing 1 leaves a lower probability of future
budget completion. This experiment is a regular-state scaling check rather than
a real-model result; the Qwen3-8B finite-trie results remain the real-model
evidence. The offline backward DP construction took 0.35--0.94ms across these
225--496-state automata; token-time lookup is then table access.

\paragraph{Learned finite-trie value estimator.} The largest practical gap after
the exact-oracle experiments is a positive amortized estimator beyond OneStep. We
train the balanced-gated MLP estimator from \Cref{sec:estimators} on exact
$\Phi$ labels from three finite JSON schemas and evaluate terminal law exactly
on the held-out schema. Across five independent seeds, \Cref{tab:learned_phi}
shows a conservative positive result: mean TV drops from $0.565\pm0.043$ for
OneStep to $0.326\pm0.064$ for the gated learned estimator, a
$42\pm13$\% pooled reduction. The estimator beats OneStep on at least two of
four held-out schemas in every seed, and on three of four schemas in three of
five seeds. Per schema, it preserves the type-value case where OneStep is
already at numerical floor; without the gate, the ungated MLP catastrophically
degrades this schema (mean TV 0.432 versus OneStep's $10^{-15}$ floor). The
learned branch reduces status TV by 99\% and action/method-path TV by 22\%/19\%
on average. These are mean improvements rather than per-fold monotonic
guarantees: Schema C worsens in one of five seeds and Schema D worsens in one of
five seeds. With only four schemas, the leave-one-schema-out result should be
read as a feasibility demonstration rather than a general schema-agnostic
estimator evaluation. This is still a finite-trie result; it is included to
demonstrate an actionable estimator path, not to claim an open-xgrammar
production implementation or runtime result.

\paragraph{Real-model external-schema transfer diagnostic.}
As an additional real-model transfer check, we train feature-based learned
$\Phi$ regressors on Qwen3-8B finite-token-trie exact labels and evaluate them,
without terminal sampling, on external finite JSON schemas. The first diagnostic
trains on A--T and evaluates a schema-neighborhood gate on U--Z, reducing mean
TV from $0.462$ for local projection to $0.438$. The stronger calibrated
diagnostic chooses one shrinkage coefficient by leave-one-schema-out calibration
on the training schemas before testing externally. On U--AH, calibrated
shrinkage reduces mean TV from $0.510\pm0.001$ to $0.4982\pm0.0001$ across two
seeds and beats local projection on 9/14 schemas in both seeds. With a larger
A--Z training pool and held-out AA--AH test set, it reduces TV from $0.547$ to
$0.525\pm0.002$ across three seeds and beats local projection on 5/8 schemas in
all seeds. These exact terminal-law results show transferable signal in the
learned features, but the gains are modest and still finite-trie only.

\begin{table}[t]
\centering
\caption{Qwen3-8B external-schema learned-$\Phi$ transfer diagnostics. All rows
use exact finite-token-trie terminal-law TV to $\mu^\star$ with no terminal
sampling. Calibrated rows choose one global shrinkage coefficient using only the
training schemas.}
\label{tab:qwen_external_learned_phi}
\small
\setlength{\tabcolsep}{3pt}
\begin{tabular}{lccccc}
\toprule
\textbf{Train$\to$test} & \textbf{Seeds} & \textbf{Schemas/strings} & \textbf{Local} & \textbf{OneStep} & \textbf{Selected} \\
\midrule
A--T$\to$U--Z gate & 1 & 6 / 408 & 0.462 & 0.548 & 0.438 \\
A--T$\to$U--AH calib. & 2 & 14 / 937 & $0.510\pm0.001$ & $0.546\pm0.003$ & $0.4982\pm0.0001$ \\
A--Z$\to$AA--AH calib. & 3 & 8 / 529 & 0.547 & 0.539 & $0.525\pm0.002$ \\
\bottomrule
\end{tabular}
\end{table}

\begin{table}[t]
\centering
\caption{Leave-one-schema-out learned $\Phi$ estimator on finite JSON
($5$ seeds). The learned estimator uses exact labels only from the training
schemas and a non-oracle balanced-branch gate. Values are mean$\pm$std over
seeds; TV is to $\mu^\star$. All TV values are exact terminal-law calculations,
not sampling estimates, so baselines can differ slightly from
\Cref{tab:grammar_family}.}
\label{tab:learned_phi}
\small
\begin{tabular}{lccc}
\toprule
\textbf{Estimator} & \textbf{Mean TV} & \textbf{Reduction vs OneStep} & \textbf{Wins vs OneStep} \\
\midrule
Uniform / local projection & $0.584\pm0.035$ & --- & --- \\
OneStep & $0.565\pm0.043$ & --- & --- \\
MLP learned $\Phi$ & $0.435\pm0.077$ & $23\pm15$\% & $\geq2/4$ schemas in 5/5 seeds \\
Balanced-gated MLP $\Phi$ & $0.326\pm0.064$ & $42\pm13$\% & $\geq2/4$ schemas in 5/5 seeds \\
\bottomrule
\end{tabular}
\end{table}

\subsection{E4: Bound Tightness}

\begin{figure}[t]
\centering
\includegraphics[width=0.85\linewidth]{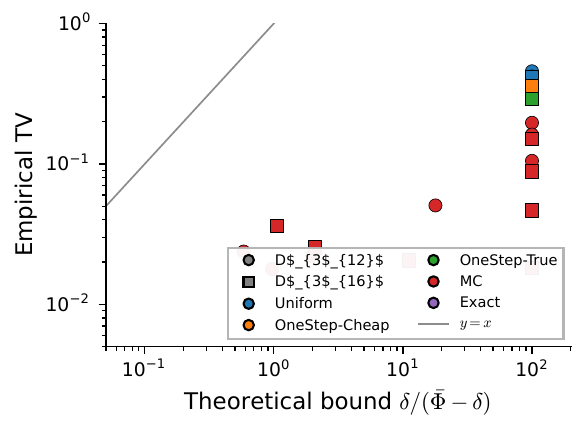}
\caption{Empirical TV vs.\ theoretical bound $\delta/(\bar{\Phi}-\delta)$ across grammar families and estimator tiers. Points below the diagonal satisfy the bound. The bound is tightest for permissive grammars ($\bar{\Phi} \approx 1$) and vacuous for recursive grammars ($\bar{\Phi} \ll 1$).}
\label{fig:bound}
\end{figure}

\Cref{fig:bound} plots empirical TV against the theoretical additive bound for all (grammar, estimator) combinations where the bound is non-vacuous ($\delta < \bar{\Phi}$). The bound is tightest on finite JSON ($\bar{\Phi} = 0.07\text{--}0.11$) where it overestimates TV by moderate factors. On Dyck ($\bar{\Phi} \approx 0.47$), the additive bound is vacuous for most estimators because $\delta > \bar{\Phi}$ for all tiers except MC($k \geq 16$) and Exact, reflecting the genuine difficulty of approximating $\Phi$ in absolute error for deeply recursive grammars. \Cref{cor:relative_phi} gives a complementary relative-error certificate that avoids the small-$\bar{\Phi}$ denominator, but the evaluated OneStep and MC estimators do not provide uniform relative-error guarantees. Note: the Exact tier achieves $\delta = 0$, giving a theoretical bound of 0, but empirical TV remains ${\sim}0.014$ due to finite-sample noise in the sampling procedure ($N = 10{,}000$).

\subsection{E5: Concrete Distributional Bias}

Beyond TV numbers, we demonstrate that the $\mu^{\proj}$--$\mu^\star$ gap manifests as measurable structural distortion in generated outputs.

\paragraph{Nesting depth bias.} On Dyck $D_{3,16}$, $\mu^{\proj}$ diverges significantly from $\mu^\star$ in structural properties. \Cref{fig:depth_bias} shows the distribution of maximum nesting depth across 10{,}000 samples: $\mu^{\proj}$ has mean depth 1.72 while $\mu^\star$ has mean depth 1.09 (KL divergence 0.25 nats). This occurs because the toy LM assigns higher probability to opening brackets, but shallow structures have higher future validity, creating an interaction that local masking cannot capture. The exact oracle recovers the true depth distribution (KL $< 0.001$); OneStep provides partial correction (KL = 0.20).

\paragraph{String length bias.} Similarly, $\mu^{\proj}$ produces systematically different string lengths (mean 5.85) compared to $\mu^\star$ (mean 2.77), with KL divergence 0.70 nats. The exact oracle recovers the true length distribution (KL $< 0.001$). This demonstrates that the TV gap corresponds to concrete, measurable structural distortion---not just an abstract distributional metric.

\begin{figure}[t]
\centering
\includegraphics[width=0.85\linewidth]{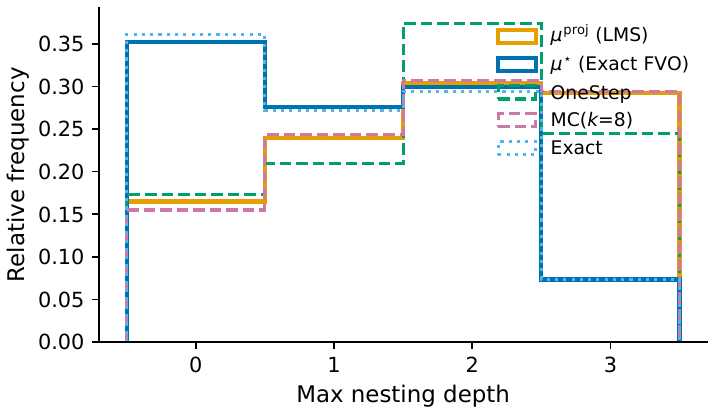}
\caption{Maximum nesting depth distribution on Dyck $D_{3,16}$. Local masking ($\mu^{\proj}$, mean depth 1.72) diverges from the true conditional ($\mu^\star$, mean depth 1.09); $\Phi$-correction recovers the true distribution.}
\label{fig:depth_bias}
\end{figure}

\subsection{E6: Speed--Fidelity Tradeoff}

\begin{figure}[t]
\centering
\includegraphics[width=0.85\linewidth]{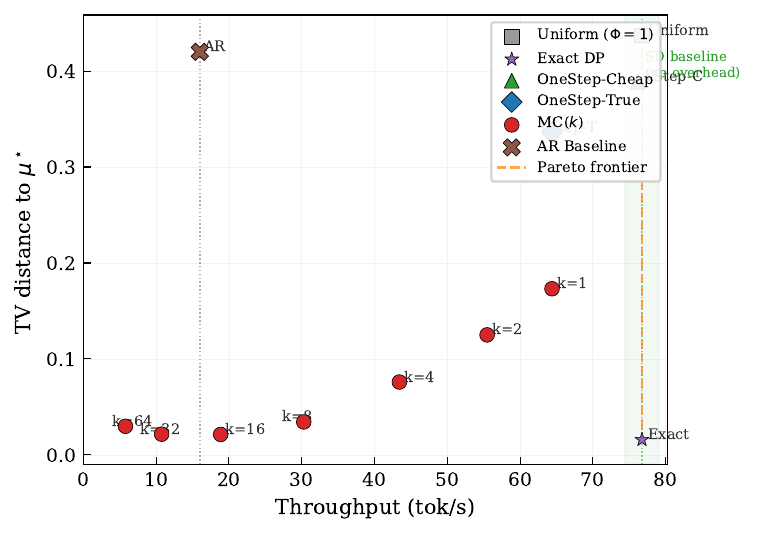}
\caption{Speed--fidelity Pareto frontier (cost-model estimate) on Dyck $D_{3,16}$. OneStep achieves 14\% TV reduction in this grammar with near-baseline SD throughput (${\sim}77$ tok/s), but \Cref{tab:grammar_family} shows that it can fail on concentrated finite JSON schemas. MC($k$=8) achieves 96\% TV reduction at ${\sim}30$ tok/s. Grammar-masked AR shares the same distributional bias as SD (both sample $\mu^{\proj}$).}
\label{fig:speed}
\end{figure}

A critical question is whether $\Phi$-correction preserves the throughput advantage of speculative decoding. \Cref{fig:speed} plots estimated throughput versus TV for each tier on H100 PCIe with Qwen3-8B.

\paragraph{Cost model.} We estimate per-round overhead from $\Phi$ estimation. The SD baseline operates at ${\sim}80$ tok/s with 35ms verification and 5ms drafting per round (${\sim}3$ accepted tokens). Uniform and OneStep-Cheap add negligible overhead ($<$0.3ms from grammar trie queries). OneStep-True requires ${\sim}1$ extra forward per candidate (${\sim}2.5$ candidates $\times$ ${\sim}3$ positions = 7.5 forwards = 7.5ms). MC($k$) requires $k \times |A_t|$ forwards per position.

\begin{table}[t]
\centering
\caption{Estimated throughput from cost model (parameters: 35ms verification, 5ms draft, 3.07 accepted tokens/round, 1ms per forward pass). TV values from Dyck $D_{3,16}$ ($N$=10K). OneStep-Cheap adds $<$0.3ms/round; Exact DP precomputes offline and adds no neural forward passes at runtime.}
\label{tab:speed}
\small
\begin{tabular}{lcccc}
\toprule
\textbf{Method} & \textbf{tok/s} & \textbf{vs AR} & \textbf{TV to $\mu^\star$} & \textbf{TV $\Delta$} \\
\midrule
AR baseline & 16.0 & 1.0$\times$ & 0.418 & --- \\
SD (Uniform, no $\Phi$) & 76.8 & 4.8$\times$ & 0.418 & 0\% \\
SD + OneStep-Cheap $\Phi$ & 76.2 & 4.8$\times$ & 0.359 & $-$14\% \\
SD + OneStep-True $\Phi$ & 64.4 & 4.0$\times$ & 0.290 & $-$31\% \\
SD + Exact DP & 76.8 & 4.8$\times$ & 0.014 & $-$97\% \\
\bottomrule
\end{tabular}
\end{table}

Three observations emerge from \Cref{tab:speed}. First, speculative decoding delivers $4.8\times$ throughput over grammar-masked AR in the cost model (${\sim}3$ tokens accepted per 40ms round). Both grammar-masked AR and SD produce the same $\mu^{\proj}$---SD accelerates inference without fixing or worsening the distributional bias. Second, \textbf{OneStep-Cheap $\Phi$ has negligible modeled throughput overhead}: the trie-query cost ($<$0.3ms/round) is dwarfed by the 35ms verification pass, yielding $<$1\% throughput reduction for 14\% TV improvement in this Dyck setting, but \Cref{tab:grammar_family} shows this estimator is not robust enough to claim as a general production fix. Third, where exact DP is available (Dyck, finite JSON, regular languages), $\Phi$ tables can be precomputed offline and queried without additional neural forward passes, achieving 97\% TV reduction at baseline SD throughput in the cost model.

\paragraph{Hardware validation.} We confirm the negligible overhead on actual hardware: on H800 (80GB) with Qwen3-8B, AR decoding achieves 49.8 tok/s over 50 prompts ($\times$ 256 tokens). Adding the OneStep $\Phi$ trie-query computation introduces 3.4ms total overhead across 12{,}800 generated tokens ($< 0.001$ms/token), well within measurement noise.

\paragraph{Backend context.} Existing speculative backends already provide substantial speedups on structured-output workloads; our contribution is to correct their sampling target rather than replace their draft mechanism. \Cref{tab:backend_context} summarizes a separate greedy JSON benchmark over six schemas ($20$ prompts/schema) using Qwen3-8B. These numbers are included only to calibrate the speed envelope of current SD systems: the DFlash, DDTree, and EAGLE-3 rows are not $\Phi$-integrated runs and therefore still target $\mu^{\proj}$ under local masking. The finite-trie FVO loop in \Cref{tab:fvo_loop} instead validates the correction layer's sampling law, not production throughput.

\begin{table}[t]
\centering
\caption{Contextual structured-output throughput of existing SD backends on Qwen3-8B. The AR+grammar baseline is 50.8 tok/s. DFlash/DDTree/EAGLE-3 measurements are not $\Phi$-integrated; they show the speed envelope that a future production FVO integration should preserve.}
\label{tab:backend_context}
\small
\begin{tabular}{lccc}
\toprule
\textbf{System} & \textbf{Measured mode} & \textbf{tok/s} & \textbf{vs AR+grammar} \\
\midrule
DFlash & bilateral grammar mask & 242.4 & 4.77$\times$ \\
DDTree & bilateral grammar mask & 250.4 & 4.93$\times$ \\
EAGLE-3 & SGLang + xgrammar & 261.9 & 5.15$\times$ \\
\midrule
FVO-Spec finite trie & exact-$\Phi$ loop, fidelity only & --- & 0.949 accept rate \\
\bottomrule
\end{tabular}
\end{table}

\paragraph{DFlash+xgrammar $\Phi$ pilot.} We also run the correction inside the
actual DFlash+xgrammar verification loop on an A100 server using Qwen3-8B,
z-lab/Qwen3-8B-DFlash-b16, six JSON schemas, and $20$ prompts per schema
($\texttt{candidate\_k}=4$, temperature $0.6$). \Cref{tab:dflash_phi_pilot}
separates two questions. The true candidate-conditioned estimator performs the
mathematically correct top-$K$ $\Phi(v)$ query by running extra target forwards
for candidate prefixes; it is a correctness prototype and remains faster than
AR, but it gives up about 30\% of SchemaTV throughput. The cheap shared-logit
estimator reuses the target logits already materialized during verification; it
is approximate, but preserves essentially the same throughput as SchemaTV while
still changing the target distribution at scored positions. This pilot is a
mechanism-feasibility and throughput measurement: it does not estimate the
terminal-string TV to $\mu^\star$ on open xgrammar schemas, and the cheap
estimator can fail in the same regimes where OneStep-Cheap fails in
\Cref{tab:grammar_family}. We therefore treat the cheap row as a speed-preserving
estimator candidate, not as a fidelity claim. In both variants, the cloned
xgrammar matcher state matched the live matcher at every scored position. The
pilot scores only the top-$4$ candidates and leaves unscored valid tokens at
$\Phi=1$, so top-$K$ truncation is another approximation not covered by the
exact-$\Phi$ oracle results. The OpenAI-tool-call schema contains an
unconstrained free-string \texttt{arguments} field; with the 96-token cap, all
methods sometimes truncate before JSON closure, so the parse-rate statement
below is restricted to the other five schemas.

\begin{table}[t]
\centering
\caption{Measured DFlash+xgrammar pilot with $\Phi$ scoring on Qwen3-8B
($20$ prompts/schema, six schemas). ``TV shift'' is the mean per-position
$\TV(p_t,\tilde p_t)$ induced by $\Phi$ on scored positions; it is not a
terminal-law TV estimate. Parse rate is 1.0 on the five non-free-string schemas
for both $\Phi$ variants. Speedups are averaged per schema; ratios of mean
tok/s differ by at most 1\%.}
\label{tab:dflash_phi_pilot}
\small
\begin{tabular}{lccccc}
\toprule
\textbf{Mode} & \textbf{Extra target fwd.} & \textbf{tok/s} & \textbf{vs AR} & \textbf{vs SchemaTV} & \textbf{TV shift} \\
\midrule
AR + xgrammar & 0 & 33.8 & 1.00$\times$ & 0.39$\times$ & --- \\
DFlash SchemaTV & 0 & 87.1 & 2.58$\times$ & 1.00$\times$ & 0 \\
DFlash + true $\Phi$ top-$4$ & yes & 60.9 & 1.81$\times$ & 0.70$\times$ & 0.069 \\
DFlash + cheap $\Phi$ top-$4$ & no & 84.7 & 2.53$\times$ & 0.98$\times$ & 0.150 \\
\bottomrule
\end{tabular}
\end{table}

\paragraph{Open-xgrammar terminal-summary diagnostic.}
We also ran an open-xgrammar smoke test that saves generated JSON texts from the
same DFlash verifier loop and analyzes coarse parsed summaries. This experiment
is deliberately \emph{not} used as fidelity evidence: the natural AR+xgrammar
reference is itself locally projected rather than $\mu^\star$, the open schemas
contain free strings and numeric values that make exact terminal-string TV
sample-inefficient, and at $N=50$ prompts/schema the observed summary shifts are
small and heterogeneous. Concretely, a trigger-calibrated top-$16$
candidate-conditioned path changes mean signature-summary TV against AR+xgrammar
from 0.229 to 0.223 and feature-marginal TV from 0.0297 to 0.0289, with
\texttt{api\_response} regressing on both summaries. We therefore treat this run
only as an implementation smoke test for text capture, matcher-state cloning,
and gated $\Phi$ plumbing in open xgrammar; the terminal-law claims in this paper
come from the finite-trie experiments where $\mu^\star$ is exactly defined.

\paragraph{Production-like terminal-law pilot.} To directly test terminal
sampling fidelity in a verifier path that uses the real DFlash draft model, we
replace open xgrammar with the finite canonical JSON token tries from
\Cref{tab:real_model}. This makes $\mu^\star$ exactly enumerable while retaining
the Qwen3-8B target and z-lab/Qwen3-8B-DFlash-b16 draft path. We draw
$N=10{,}000$ terminal samples per schema under DFlash SchemaTV and under the same
draft path with exact finite-trie $\Phi$ reweighting and the standard Leviathan
accept/reject verifier. \Cref{tab:dflash_terminal_tv} shows that exact $\Phi$
reduces terminal TV to $\mu^\star$ on all four headline schemas, from mean
0.350 to 0.017 TV, a 95\% pooled reduction, while measured throughput changes
from 91.7 to 81.1 tok/s on average. On the 2{,}000-string Schema~E, the same
correction reduces TV from 0.283 to 0.062.

This result also resolves the residual diagnosed by our earlier
equality-with-independent-target-sample verifier. That diagnostic accepts a draft
token when it equals an independent target posterior sample. For a one-step
categorical target $p$ and draft $q$, the resulting marginal is
$p_i(1 + q_i - \langle p,q\rangle)$ rather than $p_i$, so it can amplify
draft-favored tokens even after the target posterior is $\Phi$-corrected. With
that equality verifier, exact $\Phi$ left mean TV 0.088 on A--D and 0.255 on
Schema~E. Replacing it with the standard accept/reject kernel lowers those
residuals to 0.017 and 0.062, respectively. The grammar is still an enumerable
token trie, not an open xgrammar schema with arbitrary whitespace or free-text
fields, so we treat this as production-like verifier-path fidelity rather than
open-xgrammar terminal-law evidence.

\begin{table}[t]
\centering
\caption{Production-like DFlash finite-trie terminal-law TV on Qwen3-8B
($N=10{,}000$ samples/schema). The verifier uses z-lab/Qwen3-8B-DFlash-b16 and
finite canonical JSON token tries, so $\mu^\star$ is exactly known. Exact
$\Phi$ improves terminal TV on every schema. The tok/s column reports
SchemaTV/exact-$\Phi$ throughput for the same DFlash path. Schema~E is a
larger-support stress case and is not included in the A--D headline mean.}
\label{tab:dflash_terminal_tv}
\small
\setlength{\tabcolsep}{3pt}
\begin{tabular}{lccccc}
\toprule
\textbf{Schema} & \textbf{Strings} & \textbf{SchemaTV TV} & \textbf{exact-$\Phi$ TV} & \textbf{tok/s} & \textbf{Reduction} \\
\midrule
status (A) & 3 & 0.180 & 0.011 & 53.3/49.0 & 94\% \\
type-value (B) & 4 & 0.429 & 0.031 & 110.4/96.7 & 93\% \\
action-target (C) & 18 & 0.216 & 0.013 & 99.4/89.5 & 94\% \\
method-path (D) & 24 & 0.576 & 0.014 & 103.6/89.3 & 98\% \\
\midrule
Mean A--D & --- & 0.350 & 0.017 & 91.7/81.1 & 95\% \\
\midrule
flag-code (E) & 2{,}000 & 0.283 & 0.062 & 76.5/78.3 & 78\% \\
\bottomrule
\end{tabular}
\end{table}

The DFlash TV values are point estimates from terminal samples. For Schema~E,
the support has 2{,}000 strings and the ideal finite-trie FVO-Spec loop reaches
TV 0.0148 only at $N=50{,}000$; scaling that sampling floor to $N=10{,}000$
puts a substantial part of the observed 0.062 residual in the expected
finite-sample regime. We therefore use Schema~E as a stress diagnostic rather
than as the headline mean.

\paragraph{Independent-seed check.}
To check that the DFlash verifier-path gain is not a single-sample accident, we
rerun the A--D finite-trie terminal-law experiment with three additional random
seeds at $N=2{,}000$ terminal samples per schema and seed. Exact finite-trie
$\Phi$ improves every one of the 12 schema--seed pairs. Aggregating these
independent runs, mean SchemaTV TV is 0.349 and mean exact-$\Phi$ TV is 0.021,
a 94\% pooled reduction; a two-sided sign test over the 12 paired TV
differences gives $p=4.9{\times}10^{-4}$. We keep the $N=10{,}000$ run in
\Cref{tab:dflash_terminal_tv} as the headline point estimate and use the
independent-seed run as repeatability evidence.

\paragraph{Functional reading of the TV gap.} Even this three-string status
schema has an operational interpretation. Under $\mu^\star$, the Qwen3-8B
conditional distribution assigns 8.8\% mass to \texttt{"error"}; DFlash SchemaTV
emits \texttt{"error"} 26.8\% of the time, a $3.1\times$ overproduction of the
error branch. Exact finite-trie $\Phi$ with the standard verifier reduces this
to 7.9\%. This is not a
separate downstream benchmark, but it shows that the terminal-TV gap corresponds
to concrete branch-frequency shifts a tool-using system would observe, not only
to an abstract distributional metric.

\subsection{E7: Real-Model Exact $\Phi$ Correction}

To verify the $\mu^{\proj}$--$\mu^\star$ gap and its correction are not artifacts of toy LMs, we evaluate on Qwen3-8B with five finite canonical JSON languages ranging from 3 to 2{,}000 valid strings. Four are small schemas with 3--24 canonical JSON strings; the fifth is a regular code schema whose exact $\Phi$ computation requires backward DP over a 4{,}232-node token trie. We query Qwen3-8B for $p(y \mid \text{prompt}, x_{<t})$ at every trie prefix, and compute three distributions analytically: exact $\mu^\star$ by sequence likelihood, $\mu^{\proj}$ by local trie masking and renormalization, and exact-$\Phi$ by backward dynamic programming over the trie. This finite-language protocol has no sampling noise and no dependence on matcher rollback or whitespace termination behavior.

\begin{table}[t]
\centering
\caption{Real-model exact-$\Phi$ correction on Qwen3-8B with finite canonical JSON languages. Local masking misallocates substantial mass, while exact token-trie $\Phi$ correction recovers $\mu^\star$ to numerical precision. Schema~E is a regular finite language with 2{,}000 strings.}
\label{tab:real_model}
\small
\begin{tabular}{lcccc}
\toprule
\textbf{Schema} & \textbf{Strings} & $\TV(\mu^{\proj}, \mu^\star)$ & $\TV(\mu^\Phi, \mu^\star)$ & \textbf{Key Distortion} \\
\midrule
status (A) & 3 & 0.188 & $4.3{\times}10^{-16}$ & $\mu^{\proj}$ overweights ``error'' (27\% vs 8\%) \\
type-value (B) & 4 & 0.421 & $4.8{\times}10^{-16}$ & $\mu^{\proj}$ overweights type=b by $26\times$ \\
action-target (C) & 18 & 0.174 & $1.0{\times}10^{-15}$ & target/async mass shifted across fields \\
method-path (D) & 24 & 0.681 & $1.3{\times}10^{-15}$ & $\mu^\star$: GET/metrics; $\mu^{\proj}$: GET/api \\
flag-code (E) & 2{,}000 & 0.244 & $7.0{\times}10^{-16}$ & $\mu^{\proj}$ overweights code 000 by $12\times$ \\
\bottomrule
\end{tabular}
\end{table}

\Cref{tab:real_model} reports the results across all five schemas. The local-projection gap ranges from 0.174 to 0.681 TV on Qwen3-8B, confirming that the bias exists on a real 8B transformer and can be large even for finite JSON languages. Exact finite-language $\Phi$ correction closes the gap in every schema, with maximum residual TV $<2{\times}10^{-15}$. The distortion is qualitative as well as quantitative: on Schema~B, $\mu^\star$ assigns only 1.7\% mass to type=b strings, while $\mu^{\proj}$ assigns 43.8\%; on Schema~D, $\mu^\star$ concentrates on GET/metrics/v1 (55.4\%) while $\mu^{\proj}$ favors GET/api/v1 (62.6\%); on Schema~E, $\mu^{\proj}$ assigns 12.1\% to flag=true/code=000 while $\mu^\star$ assigns 1.0\%. As a prompt-robustness check, we repeat the exact-token-trie experiment with three prompt wordings on Schemas~A--D; all 12 prompt--schema pairs close the gap, with mean projected TV 0.405, minimum projected TV 0.119, and maximum exact-$\Phi$ residual TV $1.5{\times}10^{-15}$.

\begin{table}[t]
\centering
\caption{Finite-trie FVO-Spec loop on Qwen3-8B prefix probabilities (draft block $\gamma=4$; $N=50{,}000$ for all rows). The loop performs speculative accept/reject sampling against the $\Phi$-reweighted target over the same token tries as \Cref{tab:real_model}; it is an online sampler validation, not a production xgrammar/DFlash throughput benchmark.}
\label{tab:fvo_loop}
\small
\begin{tabular}{lccc}
\toprule
\textbf{Schema} & $\TV(\mu^{\proj}, \mu^\star)$ & $\TV(\hat{\mu}^{\mathrm{FVO}}, \mu^\star)$ & \textbf{Accept rate} \\
\midrule
status (A) & 0.188 & 0.0010 & 0.955 \\
type-value (B) & 0.421 & 0.0017 & 0.932 \\
action-target (C) & 0.174 & 0.0029 & 0.978 \\
method-path (D) & 0.681 & 0.0052 & 0.922 \\
flag-code (E) & 0.244 & 0.0148 & 0.966 \\
\midrule
Mean & 0.342 & 0.0051 & 0.951 \\
\bottomrule
\end{tabular}
\end{table}

\Cref{tab:fvo_loop} verifies that the correction is not merely an analytic identity. We instantiate the FVO-Spec accept/reject loop with a locally projected draft and exact token-trie $\Phi$, then compare the empirical terminal-string law to $\mu^\star$. On Schemas~A--D, the sampled law is within TV $0.0052$ of $\mu^\star$; on the 2{,}000-string Schema~E it is within TV $0.0148$ at $N=50{,}000$. The higher residual on Schema~E reflects finite-sample estimation noise over a 2{,}000-atom support, not algorithm error: the analytic $\Phi$-TV is $<2{\times}10^{-15}$, and the residual is consistent with the expected $O(\sqrt{|\calL|/N})$ sampling floor. The uncorrected locally projected law is off by mean TV $0.342$ across the five schemas. This real-model experiment directly validates the Doob-transform correction: the same Qwen3-8B logits that produce the biased locally-masked distribution recover $\mu^\star$ once reweighted by exact future validity.

\end{document}